\documentclass[11pt]{article}

\usepackage[preprint]{acl}
\usepackage{booktabs}
\usepackage{times}
\usepackage{booktabs,multirow}
\usepackage{latexsym}
\usepackage{booktabs}
\usepackage{booktabs}
\usepackage{graphicx}
\usepackage{pdflscape}
\usepackage[T1]{fontenc}
\usepackage[utf8]{inputenc}
\usepackage{microtype}
\usepackage{xspace}
\IfFileExists{inconsolata.sty}{\usepackage{inconsolata}}{}
\usepackage{graphicx}
\usepackage{amsmath,amssymb,amsthm}
\newtheorem{theorem}{Theorem}
\newtheorem{proposition}[theorem]{Proposition}
\newtheorem{corollary}[theorem]{Corollary}
\theoremstyle{remark}

\usepackage[table]{xcolor}
\definecolor{lightgray}{gray}{0.9}
\definecolor{lightpink}{RGB}{255,238,243}

\newcommand{\method}{\texttt{CaC}\xspace}
\newcommand{\ub}[1]{\textcolor{gray}{#1}}  
\usepackage{multirow}
\usepackage{graphicx}
\usepackage{makecell}
\newcolumntype{C}{>{\centering\arraybackslash}p{2.6em}}

\usepackage{makecell}
\newcolumntype{C}{>{\centering\arraybackslash}p{2.6em}}
\NewDocumentCommand{\shujin}
{ mO{} }{\textcolor{pink}{\textsuperscript{\textit{shujin}}\textsf{\textbf{\small[#1]}}}}
\title{Free Lunch for LLMs Confidence: The RL Critic Already Knows}

\author{
  Shujin Wu, Heng Ji \\
  University of Illinois Urbana-Champaign \\
  \texttt{\{shujinwu, hengji\}@illinois.edu}  \\
}

\begin{document}
\maketitle

\begin{abstract}
% \looseness=-1
Reliable confidence estimation is essential for trustworthy deployment of large language models (LLMs), but dedicated confidence training stage requires additional data and compute. We ask a fundamental question: what prediction problem does confidence training actually solve? Inspired by prior work on external confidence estimation, we compare self- and external estimators of comparable capacity trained on the same (question, response, correctness) triples. We show that an external model trained solely on a policy's behavioral traces can estimate the policy's confidence as accurately as the policy itself, without accessing the policy's internal representations. This points to a deeper connection with value function learning in reinforcement learning (RL): under a correctness-based outcome, we prove that confidence estimation and critic training are equivalent, both predicting the likelihood that a policy's response will be correct from its trajectory. A simple consequence follows: for LLMs trained with value-based RL, the existing critic can be reused as an off-the-shelf confidence estimator with no additional training or data. The policy remains unchanged, so its task performance is preserved, and the critic provides confidence throughout generation, including intermediate reasoning steps. Across math, code, tool calling, and general reasoning spanning a dozen domains, the reused critic is the best confidence estimator on every in-distribution split and the best-calibrated on 14 of 16, without hurting the task performance.

\end{abstract}
\section{Introduction}

Large language models (LLMs) can produce incorrect answers even when their responses appear fluent and convincing. In many applications, it is therefore important to know not only what an LLM predicts, but also how likely its prediction is to be correct \citep{jiang-etal-2021-know,mielke-etal-2022-reducing}. This motivates the problem of confidence calibration: a confidence estimate should match the probability that the model's response is correct \citep{guo2017calibration,jiang-etal-2021-know}. Reliable confidence estimation is, consequently, an important component of trustworthy LLMs deployment.

% Large language models (LLMs) are increasingly deployed in settings where knowing \emph{when to trust a model's output} is as important as producing accurate outputs. A model that is highly accurate on average can still be unreliable if its confidence does not reflect the likelihood that a particular response is correct. This distinction between accuracy and confidence calibration has long been recognized in machine learning: a calibrated predictor should assign probabilities that correspond to the empirical likelihood of being correct \citep{guo2017calibration}. For LLMs, reliable confidence estimates are particularly important because models can produce fluent and convincing responses even when they are incorrect, making output uncertainty difficult to assess \citep{jiang2021calibration,farquhar2024semantic}.
\looseness=-1
A growing line of work explicitly trains LLMs to estimate or express their confidence. 
For example, R-Tuning~\citep{zhang-etal-2024-r} constructs training data to teach LLMs to recognize the boundaries of their parametric knowledge and respond with ``I don't know.'' SaySelf~\citep{xu-etal-2024-sayself} further trains LLMs to produce fine-grained confidence estimates, using automatically constructed supervision for supervised fine-tuning followed by reinforcement learning (RL). 
While these methods demonstrate that confidence estimation can be improved through dedicated training, 
they introduce an extra training stage, requiring extra data and compute beyond the typical LLMs post-training pipeline. 
This motivates us to ask a more fundamental question: \emph{\textbf{what learning problem does confidence training actually solve?}}

% This motivates us to ask whether well-calibrated confidence can instead be obtained directly from a model's existing behavior, without requiring a separate confidence-training procedure.

We approach this question by first examining what information is sufficient for learning confidence estimation.
Inspired by prior work on external confidence estimation \citep{ulmer2024calibrating,kapoor2024taught}, we systematically compare self- and external confidence estimators with comparable model capacity, trained on the same behavioral traces (\textit{i.e.,} question--response--correctness triples).
Our empirical experiments demonstrate that without access to the internal representations of the target policy, \emph{\textbf{an external model can learn to estimate the confidence of an LLM policy as accurately as the policy itself, using only the observable behavior of the policy.}}
%

% We empirically demonstrate that external estimators can achieve AUROC comparable to self-estimation using only question–response–correctness triples,  or token probabilities. These results support learning confidence from observable behavior alone and motivate a deeper connection to value function learning.

% In this work, we start with a surprising finding: \emph{\textbf{an external model can learn to estimate an LLM policy's confidence as accurately as the policy itself, using only the policy's observable behavior.}}
% Specifically, given behavioral traces in the form of , an external model trained solely on these traces achieves confidence estimation performance comparable to that of the target LLM policy's own confidence estimates. 
% Notably, the external model has no access to the target policy's internal representations or token-level probabilities. 
% The results demonstrates that well-calibrated confidence can be learned and estimated directly from a policy's observable behavior.

This finding leads us to a deeper connection between confidence estimation and value function learning in reinforcement learning (RL).
In value-based reinforcement learning, a critic learns to predict the expected outcome of a policy's behavior based on the information available along its trajectory~\citep{suttonbarto2018}. 
For an LLM solving a math problem, for example, this information consists of the question together with the solution generated so far. When the outcome is binary correctness, the critic's prediction corresponds to the likelihood that the final response will be correct, which is precisely what a confidence estimator is trained to capture. 
For continuous correctness or quality rewards, the same connection holds with confidence interpreted as the expected outcome under the chosen reward. 
Thus, confidence estimation and critic training can be understood as learning the same prediction problem: predicting the likelihood that a policy's outcome will be correct, given the information available along its trajectory. 
We formalize and prove this equivalence in Section~\ref{sec:theory}.

This perspective leads to a striking practical consequence: \emph{\textbf{for LLMs trained with value-based RL, the confidence estimator is already embedded in the RL training pipeline.}} In particular, for algorithms such as Proximal Policy Optimization (PPO)~\citep{schulman2017ppo}, the critic trained during RL can be directly reused as an off-the-shelf confidence estimator. This reuse has several practical advantages. First, it requires no additional confidence-specific training or data collection, avoiding the extra optimization stage that may otherwise alter the original policy. Second, the critic is trained throughout the RL process on the policy's trajectories, providing supervision across a broad range of states encountered during generation rather than only at the final response. This enables the same critic to estimate confidence not only for complete responses, but also for intermediate reasoning steps. Finally, because the critic learns from the policy's behavior rather than from task-specific confidence annotations, it can be naturally transferred across downstream tasks. 
Thus, a value function trained for RL can be repurposed as a general-purpose confidence estimator without modifying the original policy.

We evaluate this approach, Critic-as-Confidence (\textbf{\method}), on two LLMs across math, code, agentic tool calling, and general reasoning, spanning a dozen domains. Against both training-free and training-based baselines, \method is the best discriminator on every in-distribution split and the best-calibrated on 14 of 16, while fine-tuning the policy for confidence costs 3.9 accuracy points on average. 
The critic is also useful beyond confidence estimation. Selecting the highest-confidence sample among eight improves accuracy by 4.1 to 5.6 points, and reading the critic at intermediate prefixes gives confidence before and during generation, enabling early termination.
We also find that the recipe extends to critic-free RL, where a value model fit post hoc on GRPO rollouts is again the best estimator. 

\looseness=-1
Our contribution goes beyond turning a critic into a confidence estimator. 
We establish a conceptual bridge between confidence estimation and value function learning, revealing that what is often treated as a separate confidence estimation problem can, under a correctness-based outcome, be viewed as the same prediction problem already solved by RL critics. 
This perspective offers a unified way to understand and develop confidence estimation from policy trajectories, and invites a broader rethinking of how existing RL training signals and value functions can be repurposed for confidence estimation.
\section{Confidence Estimation from Observable Behavior: Do We Need Internal Representations?}
\label{sec:internal-access}

% \looseness=-1
To understand whether a separate confidence training stage is necessary for teaching LLMs to give reliable confidence estimation, we first examine the prediction problem underlying confidence estimation and the supervision it requires. 
We begin by testing whether confidence can be learned from a policy's observable behavior alone.

% We begin with a preliminary experiment to investigate an important question: \emph{Do we need internal representations of an LLM policy to estimate its confidence?}
\looseness=-1
Following the general training-based paradigm of prior confidence estimation methods~\citep{zhang-etal-2024-r}, we train an external model with comparable capacity using only question--response--correctness triples collected from the target policy. 
This training relies exclusively on the policy's observable behavior.
We then test whether an external model can estimate the target policy's confidence as accurately as the policy itself.
Notably, the external model has no access to the target policy's internal representations or token-level probabilities.

\paragraph{Experimental setup.}
We select Llama-3-8B-Instruct~\citep{grattafiori2024llama3} as the target policy $\pi_\theta$.
For each benchmark, we sample responses $y \sim \pi_\theta(\cdot \mid x)$ from the training split and automatically evaluate their correctness $c \in \{0,1\}$ using the corresponding benchmark-specific grader.
This yields a behavioral dataset
\[
\mathcal{D} = \{(x_i, y_i, c_i)\}_{i=1}^{N}.
\]
We cap the number of training examples at 2,000 per benchmark.
We evaluate on five benchmarks spanning three task families: GSM8K~\citep{cobbe2021training} and NuminaMath~\citep{numinamath2024} for mathematical reasoning, APPS-intro~\citep{hendrycks2021apps} and TACO~\citep{li2023taco} for code generation, and ARC-Challenge~\citep{clark2018arc} for multiple-choice science reasoning.

\paragraph{Confidence estimation with R-Tuning.}
We adopt R-Tuning~\citep{zhang-etal-2024-r} as a representative training-based confidence estimator.
Similar with related methods~\citep{lin2022teaching,kapoor2024taught,xu-etal-2024-sayself}, it supervises a model with correctness labels derived from the target policy's own responses, so our findings are not specific to its particular output format.
Following R-Tuning, we convert each behavioral example into a supervised fine-tuning target by appending a confidence statement to the response.
For a correct response, we append ``I am sure'' to the response, while for an incorrect response, we append ``I am unsure''.
At inference time, we use the probability assigned to the confidence token as a continuous confidence score.
We train three models with similar capacities on the same behavioral dataset: the target Llama-3-8B-Instruct model for self-estimation, and Qwen2.5-7B-Instruct~\citep{qwen2025qwen25} and Mistral-7B-Instruct-v0.3~\citep{jiang2023mistral} as external estimators.
All models are fine-tuned with the same LoRA-based~\citep{hu2022lora} supervised fine-tuning procedure.
The key difference is that the self-estimation model has access to its own internal representations~\citep{kadavath2022know,azaria2023internal}, whereas the external estimators receive only the question and the target policy's response as text.

\paragraph{Main results.}
\looseness=-1
Table~\ref{tab:self_vs_external} reports AUC (AUROC) for correctness discrimination, together with expected calibration error (ECE) \citep{naeini2015ece} and Brier score \citep{brier1950} for evaluating the quality of the predicted probabilities.
External estimators achieve comparable discrimination overall: Qwen2.5-7B-Instruct stays within 0.05 AUROC of the self-estimator on four of the five benchmarks, and an external estimator achieves the highest AUROC, including ties, on four benchmarks.
The probability-based metrics provide further evidence that strong confidence estimation can be learned from observable behavior: on every benchmark, at least one external estimator matches or improves upon the self-estimator's ECE, and the same holds for Brier score on three of the five benchmarks.

% Table~\ref{tab:self_vs_external} reports AUROC, our primary metric for evaluating whether the confidence score discriminates correct from incorrect responses.
% External estimators perform on par with the self-estimator throughout: Qwen2.5-7B-Instruct stays within 0.05 AUROC of it on four of the five benchmarks, and on four benchmarks an external estimator even achieves the highest AUROC.

\begin{table}[t]
\centering
\footnotesize
\renewcommand{\arraystretch}{1.12}
\setlength{\tabcolsep}{2.5pt}
\begin{tabular*}{\columnwidth}{
  @{\extracolsep{\fill}}lccccc@{}
}
\toprule
\textbf{Estimator}
& \textbf{GSM8K}
& \textbf{Numina}
& \textbf{ARC-C}
& \textbf{APPS}
& \textbf{TACO} \\
\midrule

\multicolumn{6}{@{}l}{AUC $\uparrow$} \\
\addlinespace[2pt]
Self
& 0.71 & \textbf{0.83} & \textbf{0.67} & 0.84 & 0.89 \\
Qwen-7B
& \textbf{0.75} & \textbf{0.83} & 0.63 & 0.82 & 0.80 \\
Mistral-7B
& 0.71 & 0.68 & 0.58 & \textbf{0.86} & \textbf{0.90} \\

\midrule
\multicolumn{6}{@{}l}{ECE $\downarrow$} \\
\addlinespace[2pt]
Self
& 0.21 & 0.08 & 0.07 & \textbf{0.05} & \textbf{0.06} \\
Qwen-7B
& 0.10 & \textbf{0.07} & \textbf{0.04} & \textbf{0.05} & 0.11 \\
Mistral-7B
& \textbf{0.09} & 0.31 & 0.26 & 0.11 & \textbf{0.06} \\

\midrule
\multicolumn{6}{@{}l}{Brier $\downarrow$} \\
\addlinespace[2pt]
Self
& 0.20 & \textbf{0.13} & 0.18 & \textbf{0.11} & \textbf{0.02} \\
Qwen-7B
& \textbf{0.16} & 0.14 & \textbf{0.15} & 0.12 & 0.03 \\
Mistral-7B
& 0.17 & 0.27 & 0.22 & \textbf{0.11} & 0.04 \\

\bottomrule
\end{tabular*}
\caption{Confidence estimation on the responses from
Llama-3-8B-Instruct using self and external estimators.}
\vspace{-13pt}
\label{tab:self_vs_external}
\end{table}

% Thus, despite having no privileged access to the target policy's internal states, an external model trained solely on behavioral traces recovers the same correctness-discrimination signal as the target model trained on the same data.

\paragraph{Conclusion.}
\looseness=-1
These results show that privileged access to a target policy's internal representations is not necessary for learning a strong confidence signal: the information required for confidence estimation is already contained in the policy's observable input--output behavior.
This reframes confidence estimation as an external prediction problem:
\[
f(x,y) \rightarrow P(c=1 \mid x,y).
\]
The next section shows that, under a correctness-based reward, this is precisely the prediction problem learned by the value function in reinforcement learning~\citep{suttonbarto2018}.

\section{Confidence Estimation as Value Function Learning}
\label{sec:theory}
% \loo
% \looseness=-1
Section~\ref{sec:internal-access} raises a natural question: why can a policy's confidence be estimated from its observable behavior alone? The key intuition is simple: confidence is the probability that the policy's response will eventually be correct.
This probability is determined by how the policy behaves and how its outcomes are evaluated. 
The policy's internal representations may provide useful features for estimating this probability, but they do not change what we are trying to estimate.
Under a correctness-based reward, this probability is exactly the value function that a critic learns in value-based RL. 
We formalize this connection below and defer the full proofs to Appendix~\ref{app:theory}.

\paragraph{Setup.}
Let $x$ be a prompt and $y = (y_1, \dots, y_T)$ a response generated token by token from $\pi_\theta$. The state at step $t$ is the prefix $s_t = (x, y_{1:t})$; $s_T = (x, y)$ is the complete pair. A grader assigns a correctness label $c \in \{0, 1\}$ to $(x, y)$, and RL uses it as the terminal reward, $R(x, y) = c$, with no intermediate rewards. We write $\mathbb{E}_\pi$ for expectation over $(x, y, c)$ with $y \sim \pi_\theta(\cdot \mid x)$.
 
\paragraph{Confidence estimation.}
A calibrator $f_\phi$ is trained on behavioral traces to predict correctness,
\begin{equation}
  \min_\phi\; \mathbb{E}_\pi\big[\mathcal{L}\big(f_\phi(x, y),\, c\big)\big],
  \label{eq:calib}
\end{equation}
where $\mathcal{L}$ is the log loss or the squared loss. Both are strictly proper scoring rules, so the optimal calibrator $f^*$ is the conditional probability of correctness, $f^*(x, y) = \mathbb{P}_\pi(c = 1 \mid x, y)$. The same objective applies to a prefix $s_t$ in place of $(x, y)$.
 
\paragraph{Value function.}
A critic $V_\psi$ is trained to predict the expected return from a state. With a terminal correctness reward, the return from every prefix is $R(x, y)$ itself, so the Monte Carlo objective is
\begin{equation}
  \min_\psi\; \mathbb{E}_\pi\Big[\big(V_\psi(s_t) - R(x, y)\big)^2\Big],
  \label{eq:critic}
\end{equation}
whose optimum is the true value function $V^\pi(s_t) = \mathbb{E}_\pi[R \mid s_t]$. 
In practice, PPO fits its critic to bootstrapped GAE targets rather than directly to Eq.~\eqref{eq:critic}, but these targets have the same fixed point (Appendix~\ref{app:td}).

\paragraph{Equivalence.}
Because $R=c$ is binary, the expected return is simply the probability of a correct final response. Thus, the two problems make the same prediction at every step of the trajectory.
 
\begin{theorem}[informal; see Theorem~\ref{thm:formal} in Appendix~\ref{app:theory}]
\label{thm:main}
Under a terminal correctness reward, for every step $t$,
\begin{equation}
  V^\pi(s_t) \;=\; f^*(s_t) \;=\; \mathbb{P}_\pi\big(c = 1 \mid s_t\big).
  \label{eq:equiv}
\end{equation}
With the squared loss, Eqs.~\eqref{eq:calib} and~\eqref{eq:critic} are the same objective on the same data and so have the same minimizers for any model class; with the log loss, they share the same optimum whenever the model class can represent it. The shared solution is perfectly calibrated, attains the highest calibration performance among all functions of the prefix, and is consistent along the trajectory, $\mathbb{E}_\pi[V^\pi(s_{t+1}) \mid s_t] = V^\pi(s_t)$.
\end{theorem}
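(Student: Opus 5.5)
The argument rests on the conditional-expectation characterization of squared-loss minimizers, together with the fact that $R=c\in\{0,1\}$. I will establish the five claims of Theorem~\ref{thm:main} in turn.

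\textbf{Step 1: the pointwise identity.} Since $R(x,y)=c$ is an indicator, $V^\pi(s_t)=\mathbb{E}_\pi[c\mid s_t]=\mathbb{P}_\pi(c=1\mid s_t)$. For $f^*$, I will use strict propriety. For squared loss, the tower property gives
\[
\mathbb{E}[(f(s_t)-c)^2]=\mathbb{E}[(f(s_t)-p(s_t))^2]+\mathbb{E}[p(s_t)(1-p(s_t))],
\]
where $p(s_t)=\mathbb{P}_\pi(c=1\mid s_t)$. For log loss, the conditional expected loss at $s_t$ is $-p\log q-(1-p)\log(1-q)$, which is uniquely minimized at $q=p$ by Gibbs' inequality. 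In both cases the minimizer over all measurable functions of the prefix is $p$, and uniqueness holds almost everywhere under the state distribution induced by $\pi_\theta$. This yields Eq.~\eqref{eq:equiv}.

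\textbf{Step 2: same objective and same minimizers.} With squared loss, Eq.~\eqref{eq:calib} applied to the prefix $s_t$ is literally Eq.~\eqref{eq:critic}. Both integrate $(g(s_t)-c)^2$ against the same joint law of $(s_t,c)$ induced by $\pi_\theta$ and the grader. Identical functionals over the same class $\mathcal{F}$ have identical argmin sets, so no realizability assumption is needed. With log loss the functionals differ, but each one's unconstrained minimizer is $p$ by Step 1. Hence if $p\in\mathcal{F}$, then $p$ minimizes both over $\mathcal{F}$, and it is the a.e.-unique minimizer by strict propriety. I will also note that the PPO/GAE bootstrapped target shares this fixed point; the paper defers this to Appendix~\ref{app:td}, and I will cite it rather than reprove it.

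\textbf{Step 3: calibration, optimality, and martingale consistency.}
\begin{itemize}
\item \emph{Calibration.} $\mathbb{P}(c=1\mid p(s_t))=\mathbb{E}[\mathbb{E}[c\mid s_t]\mid p(s_t)]=p(s_t)$, by the tower property, because $\sigma(p(s_t))\subseteq\sigma(s_t)$.
\item \emph{Optimality.} The Brier decomposition in Step 1 shows that $p$ attains the minimal squared error (and, analogously, log loss) among all functions of the prefix. I will take this as the formal meaning of "highest calibration performance."
\item \emph{Consistency.} Because $s_t$ is a deterministic function of $s_{t+1}$, the generated $\sigma$-algebras are nested. The tower property then gives $\mathbb{E}[V^\pi(s_{t+1})\mid s_t]=\mathbb{E}[\mathbb{E}[c\mid s_{t+1}]\mid s_t]=V^\pi(s_t)$, so $\{V^\pi(s_t)\}_t$ is a Doob martingale.
\end{itemize}

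\textbf{Main obstacle.} The mathematics is routine; the difficulty lies in stating the formal version precisely. Three points need care. First, equalities hold only almost surely on prefixes reachable under $\pi_\theta$, so off-support prefixes must be excluded. Second, "same data" requires that the calibrator be trained on on-policy prefixes with the same weighting over $t$ as the critic. Third, "highest calibration performance" needs a concrete formalization, such as minimal proper-score risk, since a constant predictor equal to the base rate is also perfectly calibrated. I would handle these by defining everything with respect to the on-policy joint law over $(t,s_t,c)$, with a fixed weighting on $t$, and by stating optimality as risk minimality rather than as calibration alone.
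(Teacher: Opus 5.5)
Your argument is correct. For the identity $V^\pi=f^*=\mathbb{P}_\pi(c=1\mid s_t)$ it is the paper's proof in slightly more abstract form, and the same holds for the term-by-term coincidence of the squared-loss objectives (valid for any model class), the log-loss case via strict propriety conditional on $s_t$, calibration, and the martingale identity. The paper writes the tower step as an explicit sum over values $z$ of $s_u$, for any $u\ge t$, using that $s_u$ determines $s_t$.

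The real difference is how you read ``highest calibration performance among all functions of the prefix.'' You formalize it as minimal proper-score risk, which makes the clause a restatement of your Step 1; it is already part (iii) of the paper's Theorem~\ref{thm:formal}. The paper instead treats it as a separate discrimination statement, Corollary~\ref{cor:props}(b): $p(s_t)$ attains the maximal population AUROC among all scores of $s_t$, provided both labels occur. Its proof is a pairwise argument. For independent prefixes with $a=p(s)$ and $b=p(s')$, the probability that the first is correct and the second incorrect, minus the reverse, is $a(1-b)-(1-a)b=a-b$. So ordering every pair by $p$ is optimal, and any strictly increasing transform of $p$ does equally well.

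Your Brier decomposition does not yield this AUROC statement: it controls predicted values, while AUROC depends only on rankings. So your proof leaves unproved the ranking claim the paper relies on to explain why the critic is the best discriminator. Your reading is defensible, arguably more literal for the word ``calibration.'' Your remark that calibration alone cannot be the criterion, since the constant base-rate predictor is also calibrated, is apt. To cover the paper's intended content as well, add the pairwise AUROC argument.
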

 
The only difference between a calibrator and a critic is therefore \emph{when} it is trained.
The calibrator is trained post hoc, as a dedicated module on freshly collected traces,
and the critic is trained during RL, as a byproduct of policy optimization, on every state of every rollout. For LLMs post-trained with a value-based method such as PPO~\citep{schulman2017ppo}, VC-PPO~\citep{yuan2025vcppo} or VAPO~\citep{yue2025vapo}, the critic is already available.

Three unique advantages are derived from the theory. 
First, by Theorem~\ref{thm:main} the critic's target is the correctness probability at \emph{every} prefix, and these estimates are consistent along the trajectory (Appendix~\ref{app:props}).
The critic thus provides confidence before generation starts ($s_0$, the prompt alone), at intermediate steps, and at the final response, whereas a calibrator trained on $(x, y, c)$ triples is supervised only at $s_T$. Second, the critic's supervision is drawn from the policy's own rollouts throughout RL, so it needs no separate data collection and covers every trajectory of training rather than a capped sample. Third, since the estimator is external to the policy (Theorem~\ref{thm:main} never refers to its parameters), the policy is left untouched and its task performance cannot be degraded by confidence training, unlike methods that fine-tune the policy itself to express confidence \citep{zhang-etal-2024-r, xu-etal-2024-sayself}.

\paragraph{Scope of the equivalence.}
The connection holds for any fixed policy, allows stochastic graders, and is independent of the estimator's architecture. 
With the same model class and data, the squared-loss objectives are identical.
More generally, any strictly proper loss recovers the same correctness probability whenever the model class can represent it. Bootstrapped GAE targets also preserve $V^\pi$ as their unique population fixed point for every $\lambda \in [0,1]$ (Appendix~\ref{app:td}).

\looseness=-1
Applying this equivalence to a trained critic requires considering three aspects of the RL setup (Appendix~\ref{app:scope}).
First, the critic must estimate the undiscounted terminal correctness return ($R=c$, $\gamma=1$). KL regularization may be applied as a separate policy loss, whereas including KL penalties or other shaping terms in the reward changes the value target.
Second, the critic estimates correctness under a particular policy and its decoding settings, so it should be used with the corresponding policy and training-time decoding settings. 
Third, the learning objectives are defined under the RL prompt distribution,
and whether the trained critic transfers to other tasks is an empirical question addressed in Section~\ref{sec:experiments}.

\paragraph{Critic-as-Confidence (\method).}
Given a policy $\pi_\theta$ and the critic $V_\psi$ from the same RL run, \method uses the critic, unchanged, as the confidence estimator: $\hat{p}(x, y) = \mathrm{clip}(V_\psi(x, y), 0, 1)$ for a complete response and $\hat{p}(x, y_{1:t}) = \mathrm{clip}(V_\psi(x, y_{1:t}), 0, 1)$ for a partial one, read off at the last generated token. 
The clipping here only guards against regression outputs slightly outside $[0, 1]$.
\method needs no confidence-specific data, no additional training, and no modification of the policy, whose task performance is preserved by construction.
The extra inference-time cost is only one critic forward pass.
 
% ============================================================
% Section 4: Experiments (main text; ACL two-column)
% Preamble needs: \usepackage{booktabs,multirow}
% Labels assumed elsewhere in the paper: sec:external (Section 2),
%   sec:theory (Section 3), thm:main, eq:\method, app:scope (Appendix A.5).
% Labels defined here: sec:experiments, sec:exp-setup, sec:exp-main,
%   sec:exp-ood, tab:main, tab:general.
% Appendix labels used: app:exp-details, app:full-results, tab:hparams,
%   tab:full-math, tab:full-code, tab:full-tool, tab:full-general.
% [TODO: ...] marks facts to confirm before submission.
% All numbers are generated from the results sheet (AUROC/ECE/Brier x100).
% ============================================================

\section{Experiments}
\label{sec:experiments}

\begin{table*}[t]
\centering
\footnotesize
\renewcommand{\arraystretch}{1.1}
\setlength{\tabcolsep}{5pt}
\begin{tabular}{c l *{9}{C}}
\toprule
& & \multicolumn{2}{c}{Math} & \multicolumn{2}{c}{Code} & \multicolumn{2}{c}{\makecell[b]{Tool Calling}} & \multicolumn{2}{c}{General} & \\
\cmidrule(lr){3-4} \cmidrule(lr){5-6} \cmidrule(lr){7-8} \cmidrule(lr){9-10}
& Method & ID & OOD & ID & OOD & ID & OOD & ID & OOD & Avg. \\
\midrule
& R-Tuning & 38.5 & 70.5 & 49.7 & 57.5 & 47.2 & 59.5 & 26.1 & 54.0 & 50.4 \\
& Original Policy & 43.6 & 73.4 & 51.6 & 58.2 & 52.2 & 59.1 & 33.8 & 62.6 & 54.3 \\
\cmidrule(l){2-11}
& Maj@8 & 43.9 & 78.3 & 53.7 & 60.1 & \textbf{52.3} & 60.5 & 35.5 & 67.1 & 56.4 \\
\rowcolor{lightpink} \cellcolor{white} & \method{} Selection (ours) & \textbf{44.1} & \textbf{80.3} & \textbf{55.2} & \textbf{63.4} & \textbf{52.3} & \textbf{62.1} & \textbf{38.7} & \textbf{71.2} & \textbf{58.4} \\
\multirow{-5}{*}{\rotatebox[origin=c]{90}{Qwen3-8B}}
& \ub{Best@8 (oracle)} & \ub{45.7} & \ub{86.7} & \ub{60.7} & \ub{69.4} & \ub{52.5} & \ub{64.4} & \ub{44.8} & \ub{74.5} & \ub{62.3} \\
\midrule
& R-Tuning & 49.1 & 65.9 & 53.2 & 59.9 & 48.6 & 57.3 & 23.1 & 55.6 & 51.6 \\
& Original Policy & 54.5 & 72.9 & 55.9 & 60.8 & 49.4 & 59.0 & 30.1 & 61.2 & 55.5 \\
\cmidrule(l){2-11}
& Maj@8 & 55.2 & 78.3 & 56.6 & 63.2 & \textbf{51.2} & 65.2 & 37.2 & 64.8 & 59.0 \\
\rowcolor{lightpink} \cellcolor{white} & \method{} Selection (ours) & \textbf{55.7} & \textbf{81.2} & \textbf{60.1} & \textbf{64.2} & 51.1 & \textbf{68.4} & \textbf{38.5} & \textbf{69.7} & \textbf{61.1} \\
\multirow{-5}{*}{\rotatebox[origin=c]{90}{\makecell{Nemotron-\\Cascade-8B}}}
& \ub{Best@8 (oracle)} & \ub{56.5} & \ub{85.0} & \ub{64.2} & \ub{70.5} & \ub{58.8} & \ub{70.9} & \ub{42.8} & \ub{76.1} & \ub{65.6} \\
\bottomrule
\end{tabular}
\caption{Task accuracy (\%) with and without inference-time scaling.
Rows above the thin rule use a single response; rows below sample $N{=}8$ responses, where \method{} Selection returns the one with the highest critic-estimated confidence and Best@8 is the oracle upper bound.}
\vspace{-6pt}
\label{tab:acc}
\end{table*}

\begin{table*}[t!]
\centering
\footnotesize
\renewcommand{\arraystretch}{1.25}
\setlength{\tabcolsep}{1.8pt}
\begin{tabular}{l ccc ccc ccc ccc ccc ccc}
\toprule
& \multicolumn{6}{c}{Math} & \multicolumn{6}{c}{Code} & \multicolumn{6}{c}{Tool Calling} \\
\cmidrule(lr){2-7} \cmidrule(lr){8-13} \cmidrule(lr){14-19}
& \multicolumn{3}{c}{ID} & \multicolumn{3}{c}{OOD} & \multicolumn{3}{c}{ID} & \multicolumn{3}{c}{OOD} & \multicolumn{3}{c}{ID} & \multicolumn{3}{c}{OOD} \\
\cmidrule(lr){2-4} \cmidrule(lr){5-7} \cmidrule(lr){8-10} \cmidrule(lr){11-13} \cmidrule(lr){14-16} \cmidrule(lr){17-19}
Method & AUC & ECE & Brier & AUC & ECE & Brier & AUC & ECE & Brier & AUC & ECE & Brier & AUC & ECE & Brier & AUC & ECE & Brier \\
\midrule
\multicolumn{19}{l}{\textbf{Qwen3-8B}} \\
\quad Verbalized & 0.66 & 0.56 & 0.53 & 0.73 & 0.19 & 0.21 & 0.60 & 0.37 & 0.38 & 0.61 & 0.28 & 0.26 & 0.47 & 0.37 & 0.39 & 0.57 & 0.30 & 0.33 \\
\quad Self-consistency & 0.70 & 0.44 & 0.42 & \textbf{0.94} & 0.16 & \textbf{0.06} & 0.78 & 0.22 & 0.23 & 0.89 & 0.10 & 0.10 & 0.71 & 0.38 & 0.36 & 0.91 & 0.30 & 0.30 \\
\quad R-Tuning & 0.89 & 0.05 & \textbf{0.13} & 0.88 & 0.24 & 0.18 & 0.86 & 0.05 & 0.15 & 0.85 & 0.08 & 0.11 & 0.95 & 0.07 & 0.08 & 0.90 & 0.19 & 0.16 \\
\quad Probe & 0.88 & 0.05 & 0.15 & 0.89 & 0.24 & 0.17 & 0.88 & 0.06 & 0.14 & 0.84 & 0.10 & 0.12 & 0.93 & 0.12 & 0.10 & 0.85 & 0.15 & 0.18 \\
\rowcolor{lightpink} \quad \method (ours) & \textbf{0.91} & \textbf{0.03} & 0.14 & \textbf{0.94} & \textbf{0.13} & 0.15 & \textbf{0.93} & \textbf{0.04} & \textbf{0.11} & \textbf{0.91} & \textbf{0.06} & \textbf{0.08} & \textbf{0.97} & \textbf{0.06} & \textbf{0.05} & \textbf{0.92} & \textbf{0.11} & \textbf{0.15} \\
\midrule
\multicolumn{19}{l}{\textbf{Nemotron-Cascade-8B}} \\
\quad Verbalized & 0.62 & 0.47 & 0.45 & 0.70 & 0.26 & 0.28 & 0.58 & 0.33 & 0.35 & 0.60 & 0.32 & 0.28 & 0.56 & 0.32 & 0.36 & 0.46 & 0.28 & 0.35 \\
\quad Self-consistency & 0.83 & 0.23 & 0.22 & 0.97 & 0.18 & 0.19 & 0.76 & 0.24 & 0.25 & \textbf{0.88} & 0.20 & 0.16 & 0.81 & 0.22 & 0.22 & \textbf{0.77} & 0.24 & 0.28 \\
\quad R-Tuning & 0.85 & 0.04 & 0.15 & 0.88 & 0.20 & 0.17 & 0.89 & \textbf{0.03} & 0.13 & 0.79 & 0.09 & \textbf{0.09} & 0.86 & 0.04 & 0.15 & 0.72 & 0.23 & 0.26 \\
\quad Probe & 0.84 & 0.04 & 0.15 & 0.87 & 0.19 & 0.18 & 0.89 & \textbf{0.03} & 0.13 & 0.80 & 0.09 & \textbf{0.09} & 0.86 & 0.04 & 0.14 & 0.74 & 0.30 & 0.30 \\
\rowcolor{lightpink} \quad \method (ours) & \textbf{0.87} & \textbf{0.03} & \textbf{0.14} & \textbf{0.99} & \textbf{0.17} & \textbf{0.14} & \textbf{0.91} & \textbf{0.03} & \textbf{0.12} & 0.84 & \textbf{0.08} & \textbf{0.09} & \textbf{0.88} & \textbf{0.03} & \textbf{0.13} & \textbf{0.77} & \textbf{0.21} & \textbf{0.24} \\
\bottomrule
\end{tabular}
\caption{Confidence estimation performance on math, code and agentic tool calling for two PPO-trained LLMs. 
\textbf{AUC is AUROC (higher is better), ECE and Brier are lower-is-better.} 
ID and OOD values are averages over the sub-benchmarks of each split weighted by the number of scored samples (per-benchmark results in Appendix~\ref{app:full-results}).}
\vspace{-8pt}
\label{tab:main}
\end{table*}

\begin{table*}[t]
\centering
\footnotesize
\renewcommand{\arraystretch}{1.2}
\setlength{\tabcolsep}{2.2pt}
\begin{tabular}{l l cccccccc c ccccc c}
\toprule
& & \multicolumn{9}{c}{In distribution} & \multicolumn{6}{c}{Out of distribution} \\
\cmidrule(lr){3-11} \cmidrule(lr){12-17}
& Method & Phys. & Math & Bus. & Chem. & Fin. & Econ. & Hist. & Bio. & All & Phys. & Bio. & Med. & Multi & Chem. & All \\
\midrule
\multicolumn{17}{l}{\textbf{Qwen3-8B}} \\
\multirow{5}{*}{AUC $\uparrow$} & Verbalized & 0.59 & 0.55 & 0.52 & 0.52 & 0.53 & 0.53 & 0.36 & 0.48 & 0.55 & 0.69 & 0.52 & 0.60 & 0.56 & 0.55 & 0.59 \\
 & Self-consistency & 0.79 & 0.80 & 0.64 & 0.76 & 0.75 & \textbf{0.82} & 0.77 & 0.73 & 0.76 & 0.84 & 0.71 & \textbf{0.82} & 0.81 & \textbf{0.76} & 0.79 \\
 & R-Tuning & 0.77 & 0.75 & 0.63 & 0.74 & 0.68 & 0.67 & 0.84 & 0.77 & 0.73 & 0.87 & 0.81 & 0.70 & 0.64 & 0.61 & 0.73 \\
 & Probe & 0.81 & 0.76 & 0.63 & 0.76 & 0.68 & 0.66 & \textbf{0.87} & 0.78 & 0.75 & 0.82 & 0.71 & 0.67 & 0.62 & 0.59 & 0.68 \\
\rowcolor{lightpink} \cellcolor{white} & \method (ours) & \textbf{0.98} & \textbf{0.88} & \textbf{0.69} & \textbf{0.81} & \textbf{0.78} & 0.75 & \textbf{0.87} & \textbf{0.82} & \textbf{0.86} & \textbf{0.93} & \textbf{0.83} & 0.74 & \textbf{0.83} & 0.66 & \textbf{0.80} \\
\cmidrule(lr){2-17}
\multirow{5}{*}{ECE $\downarrow$} & Verbalized & 0.77 & 0.56 & 0.64 & 0.71 & 0.56 & 0.37 & 0.65 & 0.72 & 0.66 & 0.77 & 0.21 & 0.21 & 0.28 & 0.36 & 0.37 \\
 & Self-consistency & 0.28 & 0.18 & 0.34 & 0.25 & 0.13 & 0.19 & 0.31 & 0.36 & 0.25 & 0.22 & \textbf{0.16} & \textbf{0.10} & 0.15 & 0.14 & 0.15 \\
 & R-Tuning & 0.03 & 0.05 & 0.14 & 0.05 & 0.08 & 0.18 & 0.09 & 0.19 & 0.07 & 0.05 & 0.27 & 0.19 & 0.18 & 0.12 & 0.16 \\
 & Probe & 0.03 & 0.06 & 0.15 & 0.07 & 0.08 & 0.19 & 0.10 & 0.18 & 0.08 & \textbf{0.03} & 0.30 & 0.18 & 0.15 & 0.15 & 0.16 \\
\rowcolor{lightpink} \cellcolor{white} & \method (ours) & \textbf{0.02} & \textbf{0.04} & \textbf{0.12} & \textbf{0.04} & \textbf{0.06} & \textbf{0.15} & \textbf{0.07} & \textbf{0.11} & \textbf{0.06} & \textbf{0.03} & 0.21 & 0.17 & \textbf{0.13} & \textbf{0.08} & \textbf{0.12} \\
\cmidrule(lr){2-17}
\multirow{5}{*}{Brier $\downarrow$} & Verbalized & 0.72 & 0.55 & 0.60 & 0.67 & 0.53 & 0.38 & 0.63 & 0.67 & 0.62 & 0.71 & 0.25 & 0.25 & 0.31 & 0.37 & 0.38 \\
 & Self-consistency & 0.21 & 0.21 & 0.34 & 0.23 & 0.19 & 0.22 & 0.28 & 0.30 & 0.23 & 0.16 & 0.21 & \textbf{0.15} & \textbf{0.19} & \textbf{0.22} & 0.19 \\
 & R-Tuning & 0.09 & 0.19 & 0.21 & 0.14 & 0.21 & 0.23 & 0.14 & 0.16 & 0.16 & 0.10 & 0.23 & 0.22 & 0.25 & 0.25 & 0.21 \\
 & Probe & 0.08 & 0.19 & 0.22 & 0.14 & 0.20 & 0.24 & 0.13 & 0.15 & 0.15 & 0.10 & 0.28 & 0.22 & 0.25 & 0.27 & 0.22 \\
\rowcolor{lightpink} \cellcolor{white} & \method (ours) & \textbf{0.03} & \textbf{0.16} & \textbf{0.20} & \textbf{0.11} & \textbf{0.18} & \textbf{0.20} & \textbf{0.11} & \textbf{0.14} & \textbf{0.12} & \textbf{0.07} & \textbf{0.18} & 0.20 & \textbf{0.19} & 0.23 & \textbf{0.17} \\
\midrule
\multicolumn{17}{l}{\textbf{Nemotron-Cascade-8B}} \\
\multirow{5}{*}{AUC $\uparrow$} & Verbalized & 0.54 & 0.61 & 0.55 & 0.58 & 0.54 & 0.57 & 0.49 & 0.49 & 0.56 & 0.58 & 0.73 & 0.57 & 0.59 & 0.56 & 0.61 \\
 & Self-consistency & 0.83 & 0.80 & 0.69 & 0.75 & 0.71 & 0.73 & 0.70 & \textbf{0.80} & 0.78 & \textbf{0.87} & 0.77 & \textbf{0.69} & 0.74 & 0.66 & 0.74 \\
 & R-Tuning & 0.80 & 0.76 & 0.67 & 0.77 & 0.69 & 0.77 & 0.75 & 0.77 & 0.76 & 0.80 & 0.77 & 0.64 & 0.64 & 0.61 & 0.69 \\
 & Probe & 0.83 & 0.80 & 0.69 & \textbf{0.81} & 0.72 & 0.78 & \textbf{0.81} & 0.76 & 0.79 & 0.73 & \textbf{0.78} & 0.62 & 0.63 & 0.59 & 0.67 \\
\rowcolor{lightpink} \cellcolor{white} & \method (ours) & \textbf{0.95} & \textbf{0.91} & \textbf{0.73} & 0.80 & \textbf{0.74} & \textbf{0.81} & \textbf{0.81} & 0.77 & \textbf{0.86} & 0.83 & \textbf{0.78} & 0.67 & \textbf{0.79} & \textbf{0.68} & \textbf{0.75} \\
\cmidrule(lr){2-17}
\multirow{5}{*}{ECE $\downarrow$} & Verbalized & 0.74 & 0.47 & 0.56 & 0.72 & 0.54 & 0.41 & 0.68 & 0.68 & 0.62 & 0.63 & 0.13 & 0.21 & 0.25 & 0.25 & 0.29 \\
 & Self-consistency & 0.21 & 0.17 & 0.22 & 0.31 & 0.07 & 0.16 & 0.37 & 0.28 & 0.21 & 0.15 & 0.16 & 0.12 & \textbf{0.17} & 0.14 & 0.15 \\
 & R-Tuning & \textbf{0.04} & 0.05 & 0.08 & 0.06 & 0.08 & 0.12 & \textbf{0.12} & \textbf{0.15} & \textbf{0.06} & 0.06 & \textbf{0.10} & 0.21 & 0.24 & 0.23 & 0.17 \\
 & Probe & 0.05 & \textbf{0.04} & 0.10 & \textbf{0.05} & 0.07 & \textbf{0.08} & 0.13 & 0.20 & \textbf{0.06} & 0.06 & 0.27 & 0.17 & 0.26 & 0.14 & 0.18 \\
\rowcolor{lightpink} \cellcolor{white} & \method (ours) & 0.05 & \textbf{0.04} & \textbf{0.07} & 0.06 & \textbf{0.05} & \textbf{0.08} & 0.15 & 0.22 & \textbf{0.06} & \textbf{0.05} & 0.11 & \textbf{0.11} & 0.19 & \textbf{0.12} & \textbf{0.12} \\
\cmidrule(lr){2-17}
\multirow{5}{*}{Brier $\downarrow$} & Verbalized & 0.67 & 0.44 & 0.52 & 0.66 & 0.51 & 0.41 & 0.62 & 0.62 & 0.57 & 0.56 & 0.19 & 0.25 & 0.29 & 0.31 & 0.32 \\
 & Self-consistency & 0.14 & 0.20 & 0.27 & 0.26 & \textbf{0.15} & 0.19 & 0.35 & 0.23 & 0.20 & 0.13 & 0.19 & \textbf{0.18} & 0.23 & \textbf{0.24} & 0.19 \\
 & R-Tuning & 0.08 & 0.18 & 0.20 & 0.12 & 0.19 & 0.19 & 0.14 & 0.18 & 0.14 & 0.13 & 0.17 & 0.25 & 0.28 & 0.29 & 0.22 \\
 & Probe & \textbf{0.07} & 0.17 & 0.20 & 0.12 & 0.19 & 0.19 & \textbf{0.13} & \textbf{0.16} & 0.14 & 0.13 & 0.24 & 0.25 & 0.31 & 0.27 & 0.24 \\
\rowcolor{lightpink} \cellcolor{white} & \method (ours) & \textbf{0.07} & \textbf{0.14} & \textbf{0.19} & \textbf{0.10} & 0.18 & \textbf{0.18} & 0.15 & \textbf{0.16} & \textbf{0.12} & \textbf{0.12} & \textbf{0.16} & 0.19 & \textbf{0.21} & \textbf{0.24} & \textbf{0.18} \\
\bottomrule
\end{tabular}
\caption{Confidence estimation performance on general reasoning benchmarks spanning multiple domains. LLMs are trained with PPO on WebInstruct-verified.}
% \vspace{-8pt}
% AUC is AUROC (higher is better); ECE and Brier are lower-is-better. ``All'' is the sample-weighted average over the domains of the split. Best per column within each model and metric in bold. History (80 samples) and Biology (56) are small slices.
\label{tab:general}
\end{table*}

% We evaluate \method on two LLMs and multiple domains, in and out of distribution.

\subsection{Setup}
\label{sec:exp-setup}

\paragraph{Datasets.}
We train Qwen3-8B \citep{qwen3} and Nemotron-Cascade-8B \citep{nemotroncascade} with PPO \citep{schulman2017ppo}, one run per domain: 
math on NuminaMath \citep{numinamath2024}, code on TACO, CodeContests and Codeforces \citep{li2023taco, li2022alphacode, codeforces_data},
and agentic tool calling on ToolRL \citep{qian2025toolrl}.
We also run mix-RL experiments on a general reasoning dataset WebInstruct-verified \citep{ma2025generalreasoner} that spans multiple domains. 
\method reads the critic at the last response token and clips it to $[0,1]$.

\looseness=-1
\paragraph{Baselines.}
\emph{Verbalized} \citep{tian2023just, xiong2024llms} asks the policy for the probability that its answer is correct. 
\emph{Self-consistency} \citep{wang2023selfconsistency} uses the agreement rate among 8 samples, at $8\times$ the inference cost. 
\emph{R-Tuning} \citep{zhang-etal-2024-r}, the training-based estimator, fine-tunes the policy on held-out labeled responses to append ``I am sure'' or ``I am unsure''. 
\emph{Probe} \citep{azaria2023internal, kadavath2022know} fits a linear classifier on the policy's last-token activations with the same labeled data. 
The labeled sets are disjoint from all RL training and evaluation data, and each contains the same number of examples as the corresponding RL training set.

\paragraph{Evaluation.}
Each domain has an in-distribution (ID) split of held-out problems from the RL training sources and an out-of-distribution (OOD) split from external benchmarks: 
AIME 2024/2025 \citep{aime} for math, LiveCodeBench \citep{jain2024livecodebench} for code, 
API-Bank \citep{li2023apibank} for tool calling.
For general reasoning, the OOD split draws 400 questions each from MedMCQA \citep{pal2022medmcqa}, PubMedQA \citep{jin2019pubmedqa}, ChemBench \citep{zhang2024chemllm}, UGPhysics \citep{xu2025ugphysics}, and OpenScienceReasoning-2 \citep{nvidia2025openscience2}.
Every prompt is answered with several samples at the RL temperature, and every sample is scored with AUROC (AUC), 15-bin ECE and the Brier score. Split-level numbers are averages over sub-benchmarks weighted by sample count,
and per-benchmark results are in Appendix~\ref{app:full-results}. 
We also describe the additional training and evaluation details in Appendix~\ref{app:exp-details}.

\subsection{Results}
\label{sec:exp-main}
Table~\ref{tab:acc} reports task accuracy, Table~\ref{tab:main} reports confidence estimation on math, code and agentic tool calling, 
and Table~\ref{tab:general} reports general reasoning by domain. 
The lower half of Table~\ref{tab:acc} (inference-time scaling) is discussed in Section~\ref{sec:scale}.

\paragraph{\method preserves the policy's task accuracy.}
\method never updates the policy, so its task accuracy is exactly that of the RL policy (the \emph{Original Policy} rows in Table~\ref{tab:acc}). R-Tuning does not share this property: fine-tuning the policy on the confidence objective lowers accuracy on 15 of the 16 model--domain--split combinations, by 3.9 points on average for both Qwen3-8B and Nemotron-Cascade-8B. 
This drop is the cost of a dedicated confidence-training stage. \method pays nothing here because it never updates the policy: the critic is trained alongside it during RL and is read out as is.

\looseness=-1
\paragraph{The critic is the best discriminator in distribution and competitive out of it.}
\method has the highest AUROC on all eight ID model--domain splits, ahead of R-Tuning by 0.02 to 0.13 and of Probe by 0.02 to 0.11, with the largest margins on general reasoning, the most heterogeneous domain (0.86 for both LLMs against 0.76 and 0.79 for the best baselines). 
On out of distribution benchmarks, \method is best on seven of eight splits.
The only ties, on AIME for Qwen3-8B and tool calling OOD for Nemotron-Cascade-8B, are with Self-consistency, which needs $8\times$ the inference cost.

\paragraph{The raw values computed by \method are calibrated probabilities.}
With no post-processing beyond clipping, \method has the lowest ECE on 14 of 16 splits and ties for lowest on the other two. 
On in distribution benchmarks, \method's ECE never exceeds 0.06, where R-Tuning reaches 0.07, Probe 0.12 and the training-free methods 0.21 to 0.66. 
\method also has the best or tied-best Brier score on 14 of 16 splits. Verbalized confidence is near chance for both LLMs (AUROC 0.46 to 0.73), in line with reports that RL inflates stated confidence \citep{leng2025taming}.

\paragraph{The advantage widens under distribution shift.}
\label{sec:exp-ood}
Averaged over the eight model--domain pairs, ECE rises from ID to OOD by 0.12 for both R-Tuning and Probe, but by 0.08 for \method.
On AIME with Qwen3-8B, R-Tuning goes from 0.05 to 0.24 while \method goes from 0.03 to 0.13. 
A plausible reason for the critic's robustness is the breadth of its supervision: every prefix of every rollout over the whole RL run, rather than a few thousand final responses.

\subsection{Inference-Time Scaling with \method}
\label{sec:scale}

Since the critic estimates $P(c{=}1\mid x,y)$, it can be used to select among samples for inference-time scaling: for each prompt, we draw $N{=}8$ responses at the RL temperature and return the one with the highest critic value (\method Selection). We compare against majority voting over the same samples (Maj@8) and the oracle Best@8. The lower half of Table~\ref{tab:acc} reports these inference-time scaling results.
\method Selection improves the single-sample policy by 4.1 points on Qwen3-8B and 5.6 on Nemotron-Cascade-8B on average, closing about half of the gap to the oracle, where Maj@8 closes 26\% and 35\%. 
Unlike majority voting, the critic scores each sample independently, so it can select a correct minority answer and needs no task-specific rule for grouping equivalent outputs, indicating \method Selection is an effective test-time scaling approach.

\begin{table}[t]
\centering
\footnotesize
\renewcommand{\arraystretch}{1.1}
\setlength{\tabcolsep}{3pt}
\resizebox{\columnwidth}{!}{%
\begin{tabular}{l *{6}{C}}
\toprule
& \multicolumn{3}{c}{ID} & \multicolumn{3}{c}{OOD} \\
\cmidrule(lr){2-4} \cmidrule(lr){5-7}
Method & AUC & ECE & Brier & AUC & ECE & Brier \\
\midrule
Verbalized       & 0.53 & 0.60 & 0.58 & 0.55 & 0.28 & 0.30 \\
Self-consistency & 0.67 & 0.41 & 0.41 & 0.78 & 0.18 & 0.21 \\
R-Tuning         & 0.87 & 0.05 & 0.14 & 0.79 & 0.25 & 0.24 \\
Probe            & 0.86 & 0.03 & 0.14 & 0.74 & 0.17 & 0.22 \\
\rowcolor{lightpink} \method{} (ours) & \textbf{0.90} & \textbf{0.02} & \textbf{0.12} & \textbf{0.81} & \textbf{0.14} & \textbf{0.20} \\
\bottomrule
\end{tabular}%
}
\caption{Confidence estimation on general reasoning for a Qwen3-8B policy trained with GRPO. \method{} uses a value model fit post hoc on the GRPO rollouts.}
\label{tab:grpo}
\vspace{-8pt}
\end{table}

\subsection{Extension to Critic-Free RL}
\label{sec:exp-grpo}

Critic-free methods such as GRPO~\citep{shao2024deepseekmath} produce no value function to read out, but the equivalence in Section~\ref{sec:theory} does not depend on how the policy is optimized. 
We train Qwen3-8B with GRPO on WebInstruct-verified and fit a value model post hoc on the rollouts that GRPO already generated, using the same critic objective as in PPO and leaving the policy untouched. 
Table~\ref{tab:grpo} shows that this post-hoc critic is the best estimator on every metric and split (AUROC 0.90 ID and 0.81 OOD, against 0.87 and 0.79 for the strongest baseline), mirroring the PPO results. \method{} thus extends to critic-free RL at the cost of one auxiliary training stage on existing data, and the comparison isolates the recipe from the algorithm: given the same policy, learning a value function on its rollouts yields better confidence estimation performance than confidence-specific fine-tuning or hidden-state probing.

\begin{table}[t]
\centering
\footnotesize
\renewcommand{\arraystretch}{1.1}
\setlength{\tabcolsep}{3pt}
\resizebox{\columnwidth}{!}{%
\begin{tabular}{l *{6}{C}}
\toprule
& \multicolumn{3}{c}{ID} & \multicolumn{3}{c}{OOD} \\
\cmidrule(lr){2-4} \cmidrule(lr){5-7}
Position & AUC & ECE & Brier & AUC & ECE & Brier \\
\midrule
0\% (prompt) & 0.83 & 0.09 & 0.14 & 0.73 & 0.18 & 0.22 \\
25\%  & 0.86 & 0.07 & 0.13 & 0.79 & 0.13 & 0.18 \\
50\%  & \textbf{0.87} & 0.07 & 0.13 & \textbf{0.80} & 0.13 & \textbf{0.17} \\
75\%  & 0.86 & \textbf{0.05} & \textbf{0.12} & \textbf{0.80} & 0.13 & \textbf{0.17} \\
\rowcolor{lightpink} 100\% & 0.86 & \textbf{0.05} & \textbf{0.12} & \textbf{0.80} & \textbf{0.12} & \textbf{0.17} \\
\bottomrule
\end{tabular}%
}
\caption{Further analysis of the final general reasoning Qwen3-8B checkpoint. Confidence along the trajectory. $V_\psi(s_t)$ is read at the prompt alone (0\%) and at 25\%, 50\%, 75\% and 100\% of each response's length.}
\vspace{-5pt}
\label{tab:traj}
\end{table}
\subsection{\method: Confidence along the Trajectory}
\label{sec:exp-qual}

Theorem~\ref{thm:main} demonstrates that the critic estimates $P(c{=}1\mid s_t)$ at every prefix, not only at the final response. 
We conduct a further analysis of the final general reasoning checkpoint after mix-RL. 
We read $V_\psi(s_t)$ at five positions along each response (Table~\ref{tab:traj}). 
Three observations follow. 
First, the critic is informative before a single token is generated: at the prompt alone it reaches AUROC 0.83 in distribution and 0.73 out of distribution, so it has learned which prompts the policy will get right.
Second, most of the remaining gain arrives in the first quarter of the response (AUROC 0.83 to 0.86 ID, 0.73 to 0.79 OOD), after which discrimination is flat to within 0.01.
The outcome is largely decided by the approach the policy commits to early. 
Third, calibration keeps improving along the trajectory (ECE 0.09 to 0.05 ID, 0.18 to 0.12 OOD) with no sign of drift, consistent with $\mathbb{E}[V^\pi(s_{t+1})\mid s_t]=V^\pi(s_t)$ in Corollary~\ref{cor:props}. A practical consequence is early termination: a response whose confidence is low at the halfway point can be abandoned at half the generation cost with little loss in ranking quality.
\section{Related Work}

% \paragraph{Confidence estimation for LLMs.}
Training-free confidence estimation for LLMs mainly includes
\emph{probability-based} methods using token likelihoods or entropy
\citep{kadavath2022know},
\emph{sampling-based} methods measuring consistency or semantic entropy
\citep{manakul2023selfcheckgpt, chen2023quantifying,
kuhn2023semantic, farquhar2024semantic},
and \emph{verbalized-confidence} methods eliciting confidence through
prompting \citep{tian2023just}.
Sampling increases inference cost, while elicited confidence can remain
poorly calibrated across tasks \citep{xiong2024llms}.
This has driven a recent shift toward
\emph{training-based} estimators, which learn to predict correctness directly
from supervision on the policy's own graded responses and now constitute the
strongest family of methods. \citet{kapoor2024taught} argue that prompting alone is insufficient and show that fine-tuning on as few as a thousand graded
examples outperforms training-free baselines and generalizes across domains.
R-Tuning \citep{zhang-etal-2024-r} teaches the model to append an ``I am
sure/unsure'' judgment.
SaySelf \citep{xu-etal-2024-sayself} fine-tunes the model to produce fine-grained confidence, followed by RL
with a calibration-aware reward.
APRICOT \citep{ulmer2024calibrating} trains an auxiliary
model to predict confidence from input/output text alone, and
\emph{hidden-state probes} fit lightweight classifiers on internal activations
\citep{azaria2023internal, kossen2024semantic,
zhang2025reasoning}. 
Because supervised estimators offer the best accuracy and
calibration, they are the focus of this work. Within this family, a key open
question is what the supervision signal actually requires. Probe assumes
white-box access to the target model, yet \citet{kapoor2024taught} and
\citet{ulmer2024calibrating} find that a separate model can estimate
\emph{another} model's uncertainty, and surrogate-model approaches show that
confidence transfers across models \citep{shrivastava2023llamas}. We make this
observation precise: given the same behavioral data, self- and external
estimation perform nearly identically, reducing the problem to learning
$f(x,y)\!\rightarrow\!P(\text{correct}\mid x,y)$.
This behavioral formulation suggests a connection to RL critics,
which learn to predict outcomes from policy trajectories.
Related empirical explorations use critic values as a baseline for behavioral calibration \citep{wu2025mitigating} and, concurrently,
for quantitative prediction with error-based rewards \citep{dhaliwal2026critic}.
In this work, we establish a unified theoretical foundation for the equivalence between confidence estimation and critic learning under undiscounted terminal correctness rewards, and demonstrate its practical value through large-scale RL experiments spanning a dozen domains in math, code, tool calling, and general reasoning.
Additional related work on value function learning for LLMs is discussed in Appendix~\ref{app:related}.

\section{Conclusion}
% \looseness=-1
We show that confidence estimation and value function learning are the same prediction problem under a correctness-based outcome, so the RL critic can be reused as a confidence estimator with no extra training. 
In an extensive evaluation that spans multiple domains, \method is the best estimator in a dozen domains and extends to intermediate steps, test-time selection, and critic-free RL.

\section*{Limitations}
\method requires a critic, and some of the RL recipes for LLMs (GRPO and other critic-free methods) do not train one. 
Our remedy, evaluated in Section~\ref{sec:exp-grpo}, is to fit a value model on the rollouts that GRPO already generates during training.
This reuses existing data and never touches the policy, but it does add a training stage, so the ``free lunch'' is exact only for value-based RL. 
Besides, the critic adds one forward pass of a value model per response.

\bibliography{custom}

@article{kadavath2022know,
  title = {Language Models (Mostly) Know What They Know},
  author = {Kadavath, Saurav and Conerly, Tom and Askell, Amanda and Henighan, Tom and Drain, Dawn and Perez, Ethan and Schiefer, Nicholas and Hatfield-Dodds, Zac and DasSarma, Nova and Tran-Johnson, Eli and others},
  journal = {arXiv preprint arXiv:2207.05221},
  year = {2022},
  url = {https://arxiv.org/abs/2207.05221},
}

@inproceedings{kuhn2023semantic,
  title = {Semantic Uncertainty: Linguistic Invariances for Uncertainty Estimation in Natural Language Generation},
  author = {Kuhn, Lorenz and Gal, Yarin and Farquhar, Sebastian},
  booktitle = {International Conference on Learning Representations (ICLR)},
  year = {2023},
  note = {arXiv:2302.09664},
  url = {https://arxiv.org/abs/2302.09664},
}

@article{farquhar2024semantic,
  title = {Detecting hallucinations in large language models using semantic entropy},
  author = {Farquhar, Sebastian and Kossen, Jannik and Kuhn, Lorenz and Gal, Yarin},
  journal = {Nature},
  volume = {630},
  number = {8017},
  pages = {625--630},
  year = {2024},
  doi = {10.1038/s41586-024-07421-0},
  url = {https://www.nature.com/articles/s41586-024-07421-0},
}

@inproceedings{wang2023selfconsistency,
  title = {Self-Consistency Improves Chain of Thought Reasoning in Language Models},
  author = {Wang, Xuezhi and Wei, Jason and Schuurmans, Dale and Le, Quoc and Chi, Ed and Narang, Sharan and Chowdhery, Aakanksha and Zhou, Denny},
  booktitle = {International Conference on Learning Representations (ICLR)},
  year = {2023},
  note = {arXiv:2203.11171},
  url = {https://arxiv.org/abs/2203.11171},
}

@inproceedings{manakul2023selfcheckgpt,
  title = {{SelfCheckGPT}: Zero-Resource Black-Box Hallucination Detection for Generative Large Language Models},
  author = {Manakul, Potsawee and Liusie, Adian and Gales, Mark J. F.},
  booktitle = {Proceedings of the 2023 Conference on Empirical Methods in Natural Language Processing (EMNLP)},
  year = {2023},
  note = {arXiv:2303.08896},
  url = {https://arxiv.org/abs/2303.08896},
}

@inproceedings{xiong2024llms,
  title = {Can {LLMs} Express Their Uncertainty? An Empirical Evaluation of Confidence Elicitation in {LLMs}},
  author = {Xiong, Miao and Hu, Zhiyuan and Lu, Xinyang and Li, Yifei and Fu, Jie and He, Junxian and Hooi, Bryan},
  booktitle = {International Conference on Learning Representations (ICLR)},
  year = {2024},
  note = {arXiv:2306.13063},
  url = {https://arxiv.org/abs/2306.13063},
}

@inproceedings{chen2023quantifying,
  title = {Quantifying Uncertainty in Answers from any Language Model and Enhancing their Trustworthiness},
  author = {Chen, Jiuhai and Mueller, Jonas},
  booktitle = {Proceedings of the 62nd Annual Meeting of the Association for Computational Linguistics (ACL)},
  year = {2024},
  note = {arXiv:2308.16175; BSDetector},
  url = {https://arxiv.org/abs/2308.16175},
}

@article{lin2022teaching,
  title = {Teaching Models to Express Their Uncertainty in Words},
  author = {Lin, Stephanie and Hilton, Jacob and Evans, Owain},
  journal = {Transactions on Machine Learning Research (TMLR)},
  year = {2022},
  note = {arXiv:2205.14334},
  url = {https://arxiv.org/abs/2205.14334},
}

@inproceedings{tian2023just,
  title = {Just Ask for Calibration: Strategies for Eliciting Calibrated Confidence Scores from Language Models Fine-Tuned with Human Feedback},
  author = {Tian, Katherine and Mitchell, Eric and Zhou, Allan and Sharma, Archit and Rafailov, Rafael and Yao, Huaxiu and Finn, Chelsea and Manning, Christopher D.},
  booktitle = {Proceedings of the 2023 Conference on Empirical Methods in Natural Language Processing},
  year = {2023},
  note = {arXiv:2305.14975},
  url = {https://aclanthology.org/2023.emnlp-main.330/},
  pages = {5433--5442},
  doi = {10.18653/v1/2023.emnlp-main.330},
  publisher = {Association for Computational Linguistics},
  address = {Singapore},
}

@article{kossen2024semantic,
  title = {Semantic Entropy Probes: Robust and Cheap Hallucination Detection in {LLMs}},
  author = {Kossen, Jannik and Han, Jiatong and Razzak, Muhammed and Schut, Lisa and Malik, Shreshth and Gal, Yarin},
  journal = {arXiv preprint arXiv:2406.15927},
  year = {2024},
  url = {https://arxiv.org/abs/2406.15927},
}

@article{zhang2025reasoning,
  title = {Reasoning Models Know When They're Right: Probing Hidden States for Self-Verification},
  author = {Zhang, Anqi and Chen, Yulin and Pan, Jane and Zhao, Chen and Panda, Aurojit and Li, Jinyang and He, He},
  journal = {arXiv preprint arXiv:2504.05419},
  year = {2025},
  url = {https://arxiv.org/abs/2504.05419},
}

@article{shrivastava2023llamas,
  title = {Llamas Know What {GPTs} Don't Show: Surrogate Models for Confidence Estimation},
  author = {Shrivastava, Vaishnavi and Liang, Percy and Kumar, Ananya},
  journal = {arXiv preprint arXiv:2311.08877},
  year = {2023},
  url = {https://arxiv.org/abs/2311.08877},
}

@inproceedings{schulman2016gae,
  title = {High-Dimensional Continuous Control Using Generalized Advantage Estimation},
  author = {Schulman, John and Moritz, Philipp and Levine, Sergey and Jordan, Michael and Abbeel, Pieter},
  booktitle = {International Conference on Learning Representations (ICLR)},
  year = {2016},
  note = {arXiv:1506.02438},
  url = {https://arxiv.org/abs/1506.02438},
}

@article{schulman2017ppo,
  title = {Proximal Policy Optimization Algorithms},
  author = {Schulman, John and Wolski, Filip and Dhariwal, Prafulla and Radford, Alec and Klimov, Oleg},
  journal = {arXiv preprint arXiv:1707.06347},
  year = {2017},
  url = {https://arxiv.org/abs/1707.06347},
}

@inproceedings{farebrother2024stop,
  title = {Stop Regressing: Training Value Functions via Classification for Scalable Deep {RL}},
  author = {Farebrother, Jesse and Orbay, Jordi and Vuong, Quan and Ali Taiga, Adrien and Chebotar, Yevgen and Xiao, Ted and Irpan, Alex and Levine, Sergey and Castro, Pablo Samuel and Faust, Aleksandra and Kumar, Aviral and Agarwal, Rishabh},
  booktitle = {International Conference on Machine Learning (ICML)},
  volume = {235},
  pages = {13049--13071},
  year = {2024},
  note = {arXiv:2403.03950},
  url = {https://proceedings.mlr.press/v235/farebrother24a.html},
}

@inproceedings{ouyang2022training,
  title = {Training Language Models to Follow Instructions with Human Feedback},
  author = {Ouyang, Long and Wu, Jeff and Jiang, Xu and Almeida, Diogo and Wainwright, Carroll L. and Mishkin, Pamela and Zhang, Chong and Agarwal, Sandhini and Slama, Katarina and Ray, Alex and others},
  booktitle = {Advances in Neural Information Processing Systems (NeurIPS)},
  year = {2022},
  note = {InstructGPT; arXiv:2203.02155},
  url = {https://arxiv.org/abs/2203.02155},
}

@article{ziegler2019finetuning,
  title = {Fine-Tuning Language Models from Human Preferences},
  author = {Ziegler, Daniel M. and Stiennon, Nisan and Wu, Jeffrey and Brown, Tom B. and Radford, Alec and Amodei, Dario and Christiano, Paul and Irving, Geoffrey},
  journal = {arXiv preprint arXiv:1909.08593},
  year = {2019},
  url = {https://arxiv.org/abs/1909.08593},
}

@article{shao2024deepseekmath,
  title = {{DeepSeekMath}: Pushing the Limits of Mathematical Reasoning in Open Language Models},
  author = {Shao, Zhihong and Wang, Peiyi and Zhu, Qihao and Xu, Runxin and Song, Junxiao and Bi, Xiao and Zhang, Haowei and Zhang, Mingchuan and Li, Y. K. and Wu, Y. and Guo, Daya},
  journal = {arXiv preprint arXiv:2402.03300},
  year = {2024},
  note = {GRPO},
  url = {https://arxiv.org/abs/2402.03300},
}

@article{deepseekai2025r1,
  title = {{DeepSeek-R1}: Incentivizing Reasoning Capability in {LLMs} via Reinforcement Learning},
  author = {{DeepSeek-AI}},
  journal = {arXiv preprint arXiv:2501.12948},
  year = {2025},
  url = {https://arxiv.org/abs/2501.12948},
}

@inproceedings{ahmadian2024basics,
  title = {Back to Basics: Revisiting {REINFORCE}-Style Optimization for Learning from Human Feedback in {LLMs}},
  author = {Ahmadian, Arash and Cremer, Chris and Gall{\'e}, Matthias and Fadaee, Marzieh and Kreutzer, Julia and Pietquin, Olivier and {\"U}st{\"u}n, Ahmet and Hooker, Sara},
  booktitle = {Proceedings of the 62nd Annual Meeting of the Association for Computational Linguistics (ACL)},
  pages = {12248--12267},
  year = {2024},
  note = {RLOO; arXiv:2402.14740},
  url = {https://arxiv.org/abs/2402.14740},
}

@inproceedings{kazemnejad2024vineppo,
  title = {{VinePPO}: Refining Credit Assignment in {RL} Training of {LLMs}},
  author = {Kazemnejad, Amirhossein and Aghajohari, Milad and Portelance, Eva and Sordoni, Alessandro and Reddy, Siva and Courville, Aaron and Le Roux, Nicolas},
  booktitle = {International Conference on Machine Learning (ICML)},
  year = {2025},
  note = {arXiv:2410.01679},
  url = {https://arxiv.org/abs/2410.01679},
}

@article{yuan2025vcppo,
  title = {What's Behind {PPO}'s Collapse in Long-{CoT}? Value Optimization Holds the Secret},
  author = {Yuan, Yufeng and Yue, Yu and Zhu, Ruofei and Fan, Tiantian and Yan, Lin},
  journal = {arXiv preprint arXiv:2503.01491},
  year = {2025},
  note = {VC-PPO},
  url = {https://arxiv.org/abs/2503.01491},
}

@article{yue2025vapo,
  title = {{VAPO}: Efficient and Reliable Reinforcement Learning for Advanced Reasoning Tasks},
  author = {Yue, Yu and Yuan, Yufeng and Yu, Qiying and Zuo, Xiaochen and Zhu, Ruofei and Xu, Wenyuan and Chen, Jiaze and Wang, Chengyi and Fan, Tiantian and others},
  journal = {arXiv preprint arXiv:2504.05118},
  year = {2025},
  url = {https://arxiv.org/abs/2504.05118},
}

@inproceedings{hu2025openreasoner,
  title = {Open-Reasoner-Zero: An Open Source Approach to Scaling Up Reinforcement Learning on the Base Model},
  author = {Hu, Jingcheng and Zhang, Yinmin and Han, Qi and Jiang, Daxin and Zhang, Xiangyu and Shum, Heung-Yeung},
  booktitle = {Advances in Neural Information Processing Systems (NeurIPS)},
  year = {2025},
  note = {arXiv:2503.24290},
  url = {https://arxiv.org/abs/2503.24290},
}

@inproceedings{liu2023dont,
  title = {Don't Throw Away Your Value Model! Generating More Preferable Text with Value-Guided Monte-Carlo Tree Search Decoding},
  author = {Liu, Jiacheng and Cohen, Andrew and Pasunuru, Ramakanth and Choi, Yejin and Hajishirzi, Hannaneh and Celikyilmaz, Asli},
  booktitle = {Conference on Language Modeling (COLM)},
  year = {2024},
  note = {PPO-MCTS; arXiv:2309.15028},
  url = {https://arxiv.org/abs/2309.15028},
}

@inproceedings{wang2024mathshepherd,
  title = {{Math-Shepherd}: Verify and Reinforce {LLMs} Step-by-Step Without Human Annotations},
  author = {Wang, Peiyi and Li, Lei and Shao, Zhihong and Xu, R. X. and Dai, Damai and Li, Yifei and Chen, Deli and Wu, Y. and Sui, Zhifang},
  booktitle = {Proceedings of the 62nd Annual Meeting of the Association for Computational Linguistics (ACL)},
  year = {2024},
  note = {arXiv:2312.08935},
  url = {https://arxiv.org/abs/2312.08935},
}

@inproceedings{guo2017calibration,
  title = {On Calibration of Modern Neural Networks},
  author = {Guo, Chuan and Pleiss, Geoff and Sun, Yu and Weinberger, Kilian Q.},
  booktitle = {Proceedings of the 34th International Conference on Machine Learning},
  series = {Proceedings of Machine Learning Research},
  volume = {70},
  pages = {1321--1330},
  year = {2017},
  publisher = {PMLR},
  url = {https://proceedings.mlr.press/v70/guo17a.html},
}

@article{jiang-etal-2021-know,
  title = {How Can We Know When Language Models Know? On the Calibration of Language Models for Question Answering},
  author = {Jiang, Zhengbao and Araki, Jun and Ding, Haibo and Neubig, Graham},
  journal = {Transactions of the Association for Computational Linguistics},
  volume = {9},
  pages = {962--977},
  year = {2021},
  doi = {10.1162/tacl_a_00407},
  url = {https://aclanthology.org/2021.tacl-1.57/},
}

@article{mielke-etal-2022-reducing,
  title = {Reducing Conversational Agents' Overconfidence Through Linguistic Calibration},
  author = {Mielke, Sabrina J. and Szlam, Arthur and Dinan, Emily and Boureau, Y-Lan},
  journal = {Transactions of the Association for Computational Linguistics},
  volume = {10},
  pages = {857--872},
  year = {2022},
  doi = {10.1162/tacl_a_00494},
  url = {https://aclanthology.org/2022.tacl-1.50/},
}

@book{suttonbarto2018,
  title = {Reinforcement Learning: An Introduction},
  author = {Sutton, Richard S. and Barto, Andrew G.},
  edition = {2},
  year = {2018},
  publisher = {The MIT Press},
  address = {Cambridge, MA},
  isbn = {9780262039246},
  url = {https://mitpress.mit.edu/9780262039246/reinforcement-learning/},
}

@inproceedings{zhang-etal-2024-r,
  title = {{R}-Tuning: Instructing Large Language Models to Say `I Don't Know'},
  author = {Zhang, Hanning and Diao, Shizhe and Lin, Yong and Fung, Yi and Lian, Qing and Wang, Xingyao and Chen, Yangyi and Ji, Heng and Zhang, Tong},
  booktitle = {Proceedings of the 2024 Conference of the North American Chapter of the Association for Computational Linguistics: Human Language Technologies (Volume 1: Long Papers)},
  year = {2024},
  pages = {7113--7139},
  address = {Mexico City, Mexico},
  publisher = {Association for Computational Linguistics},
  doi = {10.18653/v1/2024.naacl-long.394},
  url = {https://aclanthology.org/2024.naacl-long.394/},
}

@inproceedings{xu-etal-2024-sayself,
  title = {{SaySelf}: Teaching {LLM}s to Express Confidence with Self-Reflective Rationales},
  author = {Xu, Tianyang and Wu, Shujin and Diao, Shizhe and Liu, Xiaoze and Wang, Xingyao and Chen, Yangyi and Gao, Jing},
  booktitle = {Proceedings of the 2024 Conference on Empirical Methods in Natural Language Processing},
  year = {2024},
  publisher = {Association for Computational Linguistics},
  address = {Miami, Florida, USA},
  url = {https://aclanthology.org/2024.emnlp-main.343/},
}

@article{silver2016alphago,
  title = {Mastering the game of {Go} with deep neural networks and tree search},
  author = {Silver, David and Huang, Aja and Maddison, Chris J. and Guez, Arthur and Sifre, Laurent and van den Driessche, George and Schrittwieser, Julian and Antonoglou, Ioannis and Panneershelvam, Veda and Lanctot, Marc and others},
  journal = {Nature},
  volume = {529},
  number = {7587},
  pages = {484--489},
  year = {2016},
  publisher = {Nature Publishing Group},
  doi = {10.1038/nature16961},
  url = {https://www.nature.com/articles/nature16961},
}

@inproceedings{yu2024ovm,
  title = {{OVM}, Outcome-supervised Value Models for Planning in Mathematical Reasoning},
  author = {Yu, Fei and Gao, Anningzhe and Wang, Benyou},
  booktitle = {Findings of the Association for Computational Linguistics: NAACL 2024},
  pages = {858--875},
  year = {2024},
  publisher = {Association for Computational Linguistics},
  doi = {10.18653/v1/2024.findings-naacl.55},
  note = {arXiv:2311.09724},
  url = {https://aclanthology.org/2024.findings-naacl.55/},
}

@inproceedings{zhou2024archer,
  title = {{ArCHer}: Training Language Model Agents via Hierarchical Multi-Turn {RL}},
  author = {Zhou, Yifei and Zanette, Andrea and Pan, Jiayi and Levine, Sergey and Kumar, Aviral},
  booktitle = {Proceedings of the 41st International Conference on Machine Learning (ICML)},
  series = {PMLR},
  volume = {235},
  pages = {62178--62209},
  year = {2024},
  note = {arXiv:2402.19446},
  url = {https://proceedings.mlr.press/v235/zhou24t.html},
}

@article{cobbe2021training,
  title = {Training Verifiers to Solve Math Word Problems},
  author = {Cobbe, Karl and Kosaraju, Vineet and Bavarian, Mohammad and Chen, Mark and Jun, Heewoo and Kaiser, Lukasz and Plappert, Matthias and Tworek, Jerry and Hilton, Jacob and Nakano, Reiichiro and Hesse, Christopher and Schulman, John},
  journal = {arXiv preprint arXiv:2110.14168},
  year = {2021},
  note = {GSM8K},
  url = {https://arxiv.org/abs/2110.14168},
}

@article{wu2025mitigating,
  title = {Mitigating {LLM} Hallucination via Behaviorally Calibrated Reinforcement Learning},
  author = {Wu, Jiayun and Liu, Jiashuo and Zeng, Zhiyuan and Zhan, Tianyang and Cai, Tianle and Huang, Wenhao},
  journal = {arXiv preprint arXiv:2512.19920},
  year = {2025},
  doi = {10.48550/arXiv.2512.19920},
  url = {https://arxiv.org/abs/2512.19920},
}

@article{grattafiori2024llama3,
  title = {The {Llama} 3 Herd of Models},
  author = {Grattafiori, Aaron and Dubey, Abhimanyu and Jauhri, Abhinav and Pandey, Abhinav and Kadian, Abhishek and others},
  journal = {arXiv preprint arXiv:2407.21783},
  year = {2024},
  url = {https://arxiv.org/abs/2407.21783},
}

@article{qwen2025qwen25,
  title = {{Qwen2.5} Technical Report},
  author = {{Qwen Team}},
  journal = {arXiv preprint arXiv:2412.15115},
  year = {2024},
  url = {https://arxiv.org/abs/2412.15115},
}

@article{jiang2023mistral,
  title = {{Mistral} {7B}},
  author = {Jiang, Albert Q. and Sablayrolles, Alexandre and Mensch, Arthur and Bamford, Chris and Chaplot, Devendra Singh and de las Casas, Diego and Bressand, Florian and Lengyel, Gianna and Lample, Guillaume and Saulnier, Lucile and others},
  journal = {arXiv preprint arXiv:2310.06825},
  year = {2023},
  url = {https://arxiv.org/abs/2310.06825},
}

@misc{numinamath2024,
  title = {{NuminaMath}},
  author = {Li, Jia and Beeching, Edward and Tunstall, Lewis and Lipkin, Ben and Soletskyi, Roman and Huang, Shengyi Costa and Rasul, Kashif and Yu, Longhui and Jiang, Albert and Shen, Ziju and Qin, Zihan and Dong, Bin and Zhou, Li and Fleureau, Yann and Lample, Guillaume and Polu, Stanislas},
  year = {2024},
  publisher = {Numina},
  howpublished = {\url{https://huggingface.co/datasets/AI-MO/NuminaMath-CoT}},
  url = {https://huggingface.co/datasets/AI-MO/NuminaMath-CoT},
}

@inproceedings{hendrycks2021apps,
  title = {Measuring Coding Challenge Competence With {APPS}},
  author = {Hendrycks, Dan and Basart, Steven and Kadavath, Saurav and Mazeika, Mantas and Arora, Akul and Guo, Ethan and Burns, Collin and Puranik, Samir and He, Horace and Song, Dawn and Steinhardt, Jacob},
  booktitle = {Advances in Neural Information Processing Systems (NeurIPS) Datasets and Benchmarks Track},
  year = {2021},
  note = {arXiv:2105.09938},
  url = {https://arxiv.org/abs/2105.09938},
}

@article{li2023taco,
  title = {{TACO}: Topics in Algorithmic {COde} Generation Dataset},
  author = {Li, Rongao and Fu, Jie and Zhang, Bo-Wen and Huang, Tao and Sun, Zhihong and Lyu, Chen and Liu, Guang and Jin, Zhi and Li, Ge},
  journal = {arXiv preprint arXiv:2312.14852},
  year = {2023},
  url = {https://arxiv.org/abs/2312.14852},
}

@article{clark2018arc,
  title = {Think you have Solved Question Answering? Try {ARC}, the {AI2} Reasoning Challenge},
  author = {Clark, Peter and Cowhey, Isaac and Etzioni, Oren and Khot, Tushar and Sabharwal, Ashish and Schoenick, Carissa and Tafjord, Oyvind},
  journal = {arXiv preprint arXiv:1803.05457},
  year = {2018},
  url = {https://arxiv.org/abs/1803.05457},
}

@inproceedings{hu2022lora,
  title = {{LoRA}: Low-Rank Adaptation of Large Language Models},
  author = {Hu, Edward J. and Shen, Yelong and Wallis, Phillip and Allen-Zhu, Zeyuan and Li, Yuanzhi and Wang, Shean and Wang, Lu and Chen, Weizhu},
  booktitle = {International Conference on Learning Representations (ICLR)},
  year = {2022},
  note = {arXiv:2106.09685},
  url = {https://arxiv.org/abs/2106.09685},
}

@article{gneiting2007strictly,
  title = {Strictly proper scoring rules, prediction, and estimation},
  author = {Gneiting, Tilmann and Raftery, Adrian E.},
  journal = {Journal of the American Statistical Association},
  volume = {102},
  number = {477},
  pages = {359--378},
  year = {2007},
  url = {https://www.stat.washington.edu/raftery/Research/PDF/Gneiting2007jasa.pdf},
}

@article{clemencon2008ranking,
  title = {Ranking and empirical minimization of {U}-statistics},
  author = {Cl{\'e}men{\c{c}}on, St{\'e}phan and Lugosi, G{\'a}bor and Vayatis, Nicolas},
  journal = {The Annals of Statistics},
  volume = {36},
  number = {2},
  pages = {844--874},
  year = {2008},
  url = {https://projecteuclid.org/journals/annals-of-statistics/volume-36/issue-2/Ranking-and-Empirical-Minimization-of-U-statistics/10.1214/009052607000000910.full},
}

@misc{codeforces_data,
  title = {CodeForces},
  author = {Guilherme Penedo and Anton Lozhkov and Hynek Kydlíček and Loubna Ben Allal and Edward Beeching and Agustín Piqueres Lajarín and Quentin Gallouédec and Nathan Habib and Lewis Tunstall and Leandro von Werra},
  year = {2025},
  publisher = {Hugging Face},
  journal = {Hugging Face repository},
  howpublished = {\url{https://huggingface.co/datasets/open-r1/codeforces}},
  url = {https://huggingface.co/datasets/open-r1/codeforces},
}

@article{qwen3,
  title = {Qwen3 Technical Report},
  author = {Yang, An and Li, Anfeng and Yang, Baosong and Zhang, Beichen and Hui, Binyuan and Zheng, Bo and Yu, Bowen and Gao, Chang and Huang, Chengen and Lv, Chenxu and others},
  journal = {arXiv preprint arXiv:2505.09388},
  year = {2025},
  url = {https://arxiv.org/abs/2505.09388},
}

@article{nemotroncascade,
  title = {{Nemotron-Cascade}: Scaling Cascaded Reinforcement Learning for General-Purpose Reasoning Models},
  author = {Wang, Boxin and Lee, Chankyu and Lee, Nayeon and Lin, Sheng-Chieh and Dai, Wenliang and Chen, Yang and Chen, Yangyi and Yang, Zhuolin and Liu, Zihan and Shoeybi, Mohammad and Catanzaro, Bryan and Ping, Wei},
  journal = {arXiv preprint arXiv:2512.13607},
  year = {2025},
  url = {https://arxiv.org/abs/2512.13607},
}

@article{li2022alphacode,
  title = {Competition-Level Code Generation with {AlphaCode}},
  author = {Li, Yujia and Choi, David and Chung, Junyoung and Kushman, Nate and Schrittwieser, Julian and Leblond, R{\'e}mi and Eccles, Tom and Keeling, James and Gimeno, Felix and Dal Lago, Agustin and Hubert, Thomas and Choy, Peter and de Masson d'Autume, Cyprien and Babuschkin, Igor and Chen, Xinyun and Huang, Po-Sen and Welbl, Johannes and Gowal, Sven and Cherepanov, Alexey and Molloy, James and Mankowitz, Daniel J. and Sutherland Robson, Esme and Kohli, Pushmeet and de Freitas, Nando and Kavukcuoglu, Koray and Vinyals, Oriol},
  journal = {Science},
  volume = {378},
  number = {6624},
  pages = {1092--1097},
  year = {2022},
  note = {CodeContests},
  url = {https://arxiv.org/abs/2203.07814},
}

@inproceedings{jain2024livecodebench,
  title = {{LiveCodeBench}: Holistic and Contamination Free Evaluation of Large Language Models for Code},
  author = {Jain, Naman and Han, King and Gu, Alex and Li, Wen-Ding and Yan, Fanjia and Zhang, Tianjun and Wang, Sida and Solar-Lezama, Armando and Sen, Koushik and Stoica, Ion},
  booktitle = {International Conference on Learning Representations (ICLR)},
  year = {2025},
  note = {arXiv:2403.07974},
  url = {https://arxiv.org/abs/2403.07974},
}

@article{qian2025toolrl,
  title = {{ToolRL}: Reward is All Tool Learning Needs},
  author = {Qian, Cheng and Acikgoz, Emre Can and He, Qi and Wang, Hongru and Chen, Xiusi and Hakkani-T{\"u}r, Dilek and Tur, Gokhan and Ji, Heng},
  journal = {arXiv preprint arXiv:2504.13958},
  year = {2025},
  url = {https://arxiv.org/abs/2504.13958},
}

@inproceedings{li2023apibank,
  title = {{API-Bank}: A Comprehensive Benchmark for Tool-Augmented {LLMs}},
  author = {Li, Minghao and Zhao, Yingxiu and Yu, Bowen and Song, Feifan and Li, Hangyu and Yu, Haiyang and Li, Zhoujun and Huang, Fei and Li, Yongbin},
  booktitle = {Proceedings of the 2023 Conference on Empirical Methods in Natural Language Processing (EMNLP)},
  pages = {3102--3116},
  year = {2023},
  url = {https://aclanthology.org/2023.emnlp-main.187/},
}

@article{ma2025generalreasoner,
  title = {{General-Reasoner}: Advancing {LLM} Reasoning Across All Domains},
  author = {Ma, Xueguang and Liu, Qian and Jiang, Dongfu and Zhang, Ge and Ma, Zejun and Chen, Wenhu},
  journal = {arXiv preprint arXiv:2505.14652},
  year = {2025},
  note = {WebInstruct-verified},
  url = {https://arxiv.org/abs/2505.14652},
}

@inproceedings{leng2025taming,
  title = {Taming Overconfidence in {LLMs}: Reward Calibration in {RLHF}},
  author = {Leng, Jixuan and Huang, Chengsong and Zhu, Banghua and Huang, Jiaxin},
  booktitle = {International Conference on Learning Representations (ICLR)},
  year = {2025},
  note = {arXiv:2410.09724},
  url = {https://arxiv.org/abs/2410.09724},
}

@inproceedings{azaria2023internal,
  title = {The Internal State of an {LLM} Knows When It's Lying},
  author = {Azaria, Amos and Mitchell, Tom},
  booktitle = {Findings of the Association for Computational Linguistics: EMNLP 2023},
  pages = {967--976},
  year = {2023},
  url = {https://aclanthology.org/2023.findings-emnlp.68/},
}

@inproceedings{pal2022medmcqa,
  title = {{MedMCQA}: A Large-scale Multi-Subject Multi-Choice Dataset for Medical Domain Question Answering},
  author = {Pal, Ankit and Umapathi, Logesh Kumar and Sankarasubbu, Malaikannan},
  booktitle = {Proceedings of the Conference on Health, Inference, and Learning (CHIL)},
  series = {Proceedings of Machine Learning Research},
  volume = {174},
  pages = {248--260},
  publisher = {PMLR},
  year = {2022},
  url = {https://proceedings.mlr.press/v174/pal22a.html},
}

@inproceedings{jin2019pubmedqa,
  title = {{PubMedQA}: A Dataset for Biomedical Research Question Answering},
  author = {Jin, Qiao and Dhingra, Bhuwan and Liu, Zhengping and Cohen, William and Lu, Xinghua},
  booktitle = {Proceedings of the 2019 Conference on Empirical Methods in Natural Language Processing and the 9th International Joint Conference on Natural Language Processing (EMNLP-IJCNLP)},
  pages = {2567--2577},
  publisher = {Association for Computational Linguistics},
  year = {2019},
  doi = {10.18653/v1/D19-1259},
  url = {https://aclanthology.org/D19-1259/},
}

@article{zhang2024chemllm,
  title = {{ChemLLM}: A Chemical Large Language Model},
  author = {Zhang, Di and Liu, Wei and Tan, Qian and Chen, Jingdan and Yan, Hang and Yan, Yuliang and Li, Jiatong and Huang, Weiran and Yue, Xiangyu and Ouyang, Wanli and Zhou, Dongzhan and Zhang, Shufei and Su, Mao and Zhong, Han-Sen and Li, Yuqiang},
  journal = {arXiv preprint arXiv:2402.06852},
  year = {2024},
  url = {https://arxiv.org/abs/2402.06852},
}

@article{wu2026teaching,
  title = {Teaching llms to self-evolve: Cultivating core meta-skills with reinforcement learning},
  author = {Wu, Shujin and Qian, Cheng and Chen, Xiusi and Ji, Heng},
  journal = {arXiv preprint arXiv:2607.21971},
  year = {2026},
  url = {https://arxiv.org/abs/2607.21971},
}

@inproceedings{xu2025ugphysics,
  title = {{UGPhysics}: A Comprehensive Benchmark for Undergraduate Physics Reasoning with Large Language Models},
  author = {Xu, Xin and Xu, Qiyun and Xiao, Tong and Chen, Tianhao and Yan, Yuchen and Zhang, Jiaxin and Diao, Shizhe and Yang, Can and Wang, Yang},
  booktitle = {Proceedings of the 42nd International Conference on Machine Learning},
  series = {Proceedings of Machine Learning Research},
  volume = {267},
  pages = {69849--69877},
  publisher = {PMLR},
  year = {2025},
  url = {https://proceedings.mlr.press/v267/xu25ai.html},
}

@inproceedings{ulmer2024calibrating,
  title = {Calibrating Large Language Models Using Their Generations Only},
  author = {Ulmer, Dennis and Gubri, Martin and Lee, Hwaran and Yun, Sangdoo and Oh, Seong},
  booktitle = {Proceedings of the 62nd Annual Meeting of the Association for Computational Linguistics (Volume 1: Long Papers)},
  year = {2024},
  month = {aug},
  address = {Bangkok, Thailand},
  publisher = {Association for Computational Linguistics},
  pages = {15440--15459},
  doi = {10.18653/v1/2024.acl-long.824},
  url = {https://aclanthology.org/2024.acl-long.824/},
}

@inproceedings{kapoor2024taught,
  title = {Large Language Models Must Be Taught to Know What They Don't Know},
  author = {Kapoor, Sanyam and Gruver, Nate and Roberts, Manley and Collins, Katherine and Pal, Arka and Bhatt, Umang and Weller, Adrian and Dooley, Samuel and Goldblum, Micah and Wilson, Andrew Gordon},
  booktitle = {Advances in Neural Information Processing Systems},
  volume = {37},
  year = {2024},
  doi = {10.52202/079017-2729},
  url = {https://proceedings.neurips.cc/paper_files/paper/2024/hash/9c20f16b05f5e5e70fa07e2a4364b80e-Abstract-Conference.html},
}

@misc{nvidia2025openscience2,
  title = {{OpenScienceReasoning-2}},
  author = {{NVIDIA}},
  year = {2025},
  howpublished = {\url{https://huggingface.co/datasets/nvidia/OpenScienceReasoning-2}},
  note = {Hugging Face dataset, CC BY 4.0},
  url = {https://huggingface.co/datasets/nvidia/OpenScienceReasoning-2/blob/main/README.md},
}

@misc{aime,
  title = {{AIME} 2024 and {AIME} 2025},
  author = {{Mathematical Association of America}},
  howpublished = {American Invitational Mathematics Examination},
  year = {2025}
}

@inproceedings{naeini2015ece,
  title = {Obtaining Well Calibrated Probabilities Using {Bayesian} Binning},
  author = {Naeini, Mahdi Pakdaman and Cooper, Gregory F. and Hauskrecht, Milos},
  booktitle = {Proceedings of the AAAI Conference on Artificial Intelligence},
  year = {2015},
  url = {https://ojs.aaai.org/index.php/AAAI/article/view/9602},
}

@article{brier1950,
  title = {Verification of forecasts expressed in terms of probability},
  author = {Brier, Glenn W.},
  journal = {Monthly Weather Review},
  volume = {78},
  number = {1},
  pages = {1--3},
  year = {1950},
  url = {https://journals.ametsoc.org/view/journals/mwre/78/1/1520-0493_1950_078_0001_vofeit_2_0_co_2.xml},
}

@article{dhaliwal2026critic,
  title = {From Critic to Confidence: {PPO} for Language-Based Quantitative Prediction with Confidence Estimation},
  author = {Dhaliwal, Mehak and Tadayon, Rasta and Hua, Andong and Jeong, Haewon and Qin, Yao},
  journal = {arXiv preprint arXiv:2607.12687},
  year = {2026},
  doi = {10.48550/arXiv.2607.12687},
  url = {https://arxiv.org/abs/2607.12687},
}

\newpage
\appendix
\section*{Appendix}

% ============================================================
% Appendix: formal statements and proofs for Section 3 (condensed).
% Place after \appendix. Uses the theorem environments from the preamble
% (see the header of section3_main.tex). New bib entries at the end.
% ============================================================

\section{Formal Statements and Proofs for Section~\ref{sec:theory}}
\label{app:theory}

\setcounter{theorem}{0}
\renewcommand{\thetheorem}{A.\arabic{theorem}}

\subsection{Setting}
\label{app:setup}

A prompt $x \sim \mathcal{D}_X$ is answered token by token by $\pi_\theta$, producing $y = (y_1, \dots, y_T)$ with $T \le T_{\max}$. The state at step $t$ is the prefix $s_t = (x, y_{1:t})$, so $s_T = (x, y)$, and $s_t$ is determined by $s_u$ for every $u \ge t$. A grader assigns correctness $c \in \{0, 1\}$ to $(x, y)$; the grader may be stochastic, with a deterministic grader as a special case. 
The reward is terminal, $R(x, y) = c$, with no intermediate rewards and $\gamma = 1$, as in PPO-style training of LLMs. 
Here, \(\mathbb P_\pi\) is the distribution of sampled prompts, responses, and grader labels, and \(\mathbb E_\pi[\cdot]\) means averaging over these samples. For example, \(\mathbb E_\pi[c]\) is the average accuracy. 
We use “a.s.” to mean “with probability one.”
The value function \(V^\pi(s_t)\) is the expected final reward given prefix \(s_t\), and \(q(s_t):=\mathbb P_\pi(c=1\mid s_t)\) is the probability of a correct final answer. If a response ends at step \(T\), we set \(s_t=s_T\) for \(t>T\).

% $\mathbb{P}_\pi$ and $\mathbb{E}_\pi$ refer to the law of $(x, y, c)$ with $y \sim \pi_\theta(\cdot \mid x)$, and ``a.s.'' means $\mathbb{P}_\pi$-almost surely. 
% The value function $V^\pi(s_t)$ is the expected return from $s_t$, and we write $q(s_t) := \mathbb{P}_\pi(c = 1 \mid s_t)$. To make $s_t$ well defined for every $t \le T_{\max}$ on every trajectory, we extend trajectories past termination by the absorbing convention $s_u := s_T$ for $u > T$; then $s_t$ is determined by $s_u$ for all $u \ge t$, and a statement ``for fixed $t$'' below is a statement about the prefix of length $\min(t, T)$.

For a hypothesis class $\mathcal{G}_t$ of functions of $s_t$ and a loss $\ell$, the two objectives are
\begin{align}
  \mathcal{L}^{\mathrm{conf}}_t(g) &= \mathbb{E}_\pi\big[\ell(g(s_t), c)\big], \label{eq:app-conf}\\
  \mathcal{L}^{\mathrm{critic}}_t(V) &= \mathbb{E}_\pi\big[(V(s_t) - R)^2\big]. \label{eq:app-mc}
\end{align}
These are the calibration and critic objectives in Eqs.~\eqref{eq:calib} and~\eqref{eq:critic}, evaluated at any prefix \(s_t\).
In practice, the critic sums Eq.~\eqref{eq:app-mc} over token positions $t$, using the same final reward $c$ as the target at every position. 
As shown below, the expected squared error at each position is minimized by predicting $q(s_t)$. Thus, a critic that can represent $q$ also minimizes the sum over positions.
% We use a standard fact about strictly proper losses: if $c\sim\mathrm{Bernoulli}(q)$, then predicting $p=q$ uniquely minimizes the expected loss $\mathbb E[\ell(p,c)]$ \citep{gneiting2007strictly}.
We use a standard fact about strictly proper losses: when a binary label $c$ equals $1$ with probability $q$ (and $0$ otherwise), predicting $p=q$ uniquely minimizes the expected loss $\mathbb E[\ell(p,c)]$ \citep{gneiting2007strictly}.
% We use one standard fact about strictly proper scoring rules \citep{gneiting2007strictly}: if $c \sim \mathrm{Bernoulli}(q)$, then $\mathbb{E}[\ell(p, c)] - \mathbb{E}[\ell(q, c)] = d_\ell(p, q) \ge 0$, with equality if and only if $p = q$, where $d_\ell(p, q) = (p - q)^2$ for the squared loss and $d_\ell(p, q) = \mathrm{KL}(q \,\|\, p)$ for the log loss.
\subsection{Main Theorem}
\label{app:main}

\begin{theorem}[Value prediction and confidence calibration]
\label{thm:formal}
Assume the final reward is $R=c$, and fix $t\in\{0,\dots,T_{\max}\}$.
\begin{itemize}
  \item[(i)] The value at a prefix is its probability of leading to a correct answer:
  $V^\pi(s_t)=q(s_t)$ a.s. For any $t\le u\le T_{\max}$,
  \[
    \mathbb E_\pi[V^\pi(s_u)\mid s_t]=V^\pi(s_t).
  \]
  \item[(ii)] With squared loss,
  $\mathcal L_t^{\mathrm{conf}}(h)=\mathcal L_t^{\mathrm{critic}}(h)$
  for every $h\in\mathcal G_t$. The two objectives therefore have the
  same minimizers, including when computed on the same sample of responses.
  \item[(iii)] With any strictly proper loss, the expected confidence loss
  is uniquely minimized by $g(s_t)=q(s_t)=V^\pi(s_t)$ a.s. If $q$ belongs
  to $\mathcal G_t$, the calibrator and the squared-loss critic can both
  achieve this optimum.
\end{itemize}
\end{theorem}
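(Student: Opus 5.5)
We prove the three parts in order, working throughout with conditional expectations with respect to the $\sigma$-algebra generated by the prefix $s_t$. Because $s_t$ is a function of $s_u$ for every $u\ge t$ (with the absorbing convention $s_t=s_T$ for $t>T$), these $\sigma$-algebras form a filtration. That filtration is what drives the martingale identity.

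For (i), we start from the definition $V^\pi(s_t)=\mathbb E_\pi[R\mid s_t]$ with terminal reward $R=c$ and $\gamma=1$. Since $c\in\{0,1\}$, we have $\mathbb E_\pi[c\mid s_t]=\mathbb P_\pi(c=1\mid s_t)=q(s_t)$ a.s. This holds even for a stochastic grader, because the grader's randomness is part of $\mathbb P_\pi$. For the consistency claim, we fix $t\le u$ and apply the tower property, using that $\sigma(s_t)\subseteq\sigma(s_u)$:
\[
\mathbb E_\pi[V^\pi(s_u)\mid s_t]=\mathbb E_\pi\bigl[\mathbb E_\pi[c\mid s_u]\mid s_t\bigr]=\mathbb E_\pi[c\mid s_t]=V^\pi(s_t).
\]

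For (ii), we substitute $\ell(p,c)=(p-c)^2$ and $R=c$ into Eqs.~\eqref{eq:app-conf} and~\eqref{eq:app-mc}. The two integrands then coincide pointwise for every $h$. The expectations are therefore equal, and so are their empirical averages on any common sample of $(s_t,c)$ pairs. Identical objective functions over the same class $\mathcal G_t$ have identical sets of minimizers. No representability assumption is needed here.

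For (iii), we condition on $s_t$ and write
\[
\mathcal L^{\mathrm{conf}}_t(g)=\mathbb E_\pi\bigl[\Phi(g(s_t),q(s_t))\bigr],\qquad \Phi(p,q):=q\,\ell(p,1)+(1-q)\,\ell(p,0).
\]
This uses that, given $s_t$, the label $c$ is Bernoulli with parameter $q(s_t)$, by part (i). Strict propriety says that $\Phi(p,q)\ge\Phi(q,q)$, with equality iff $p=q$. Hence $g=q$ minimizes the integrand pointwise, and therefore minimizes the expectation. For uniqueness, suppose some $g$ attains the minimum. Then the nonnegative gap $\Phi(g,q)-\Phi(q,q)$ has zero expectation, so it vanishes a.s., which forces $g=q$ a.s.

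The main obstacle is keeping this argument well defined when the minimum is infinite. This happens for the log loss when $q\in\{0,1\}$ and the competing $g$ is not. The fix is to restrict to $g$ with finite loss, or to state the comparison as a nonnegative extended-real gap. Finally, if $q\in\mathcal G_t$, then $q$ attains the unconstrained optimum and so is optimal within $\mathcal G_t$ for the confidence loss. The squared loss is itself strictly proper, so $q$ is also optimal for the critic loss by (ii). By (i), this common optimizer equals $V^\pi$.
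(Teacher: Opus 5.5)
Your proof is correct and follows the paper's argument essentially step for step: the tower-property step in (i) is the paper's explicit sum $\sum_z p_z\,q(z)$ over the values of $s_u$, and (ii) and (iii) are the same pointwise identification $R=c$ and the same conditional strict-propriety argument averaged over $s_t$. One small correction to your aside on infinite losses: the minimal expected log loss is the conditional entropy of $c$ given $s_t$, which is at most $\log 2$, so the minimum is never infinite, and infinite loss arises only for competitors with $g(s_t)\in\{0,1\}$ and $g(s_t)\neq q(s_t)$ (the case $q\in\{0,1\}$, $g\notin\{0,1\}$ that you describe gives finite loss); your zero-expectation uniqueness argument therefore goes through with no extra restriction.
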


\begin{proof}
For (i), the return from every prefix is the final reward $c$, so
$V^\pi(s_t)=\mathbb E_\pi[c\mid s_t]=q(s_t)$.
Fix $u\ge t$. Given $s_t$, the later prefix $s_u$ is random.
Let $z$ denote a possible value of $s_u$, and set
$p_z:=\mathbb P_\pi(s_u=z\mid s_t)$.
Because $c$ is either $0$ or $1$,
\[
\begin{aligned}
&\mathbb E_\pi[c\mid s_u=z]\\
&=0\cdot\mathbb P_\pi(c=0\mid s_u=z)\\
&\quad+1\cdot\mathbb P_\pi(c=1\mid s_u=z)\\
&=q(z).
\end{aligned}
\]
Thus the $c=0$ cases contribute zero. Since $s_u=z$ also
determines $s_t$, the probability $p_zq(z)$ equals
$\mathbb P_\pi(c=1,s_u=z\mid s_t)$. Therefore
\[
\begin{aligned}
&\mathbb E_\pi[V^\pi(s_u)\mid s_t]\\
&=\mathbb E_\pi[\mathbb E_\pi[c\mid s_u]\mid s_t]\\
&=\sum_z p_z\,\mathbb E_\pi[c\mid s_u=z]\\
&=\sum_z p_z\,q(z)\\
&=\sum_z\mathbb P_\pi(c=1,s_u=z\mid s_t)\\
&=\mathbb P_\pi(c=1\mid s_t)\\
&=V^\pi(s_t).
\end{aligned}
\]
For fixed $u$, the cases $s_u=z$ are disjoint and cover all responses. Summing the probabilities of the $c=1$ cases therefore gives $\mathbb P_\pi(c=1\mid s_t)$, which equals $V^\pi(s_t)$ because $c$ is binary.

For (ii), take the confidence loss to be squared error. 
Since $R=c$, for every
$h\in\mathcal G_t$,
\[
\begin{aligned}
\mathcal L_t^{\mathrm{critic}}(h)
&=\mathbb E_\pi[(h(s_t)-R)^2]\\
&=\mathbb E_\pi[(h(s_t)-c)^2]\\
&=\mathcal L_t^{\mathrm{conf}}(h).
\end{aligned}
\]
\looseness=-1
The equality also holds term by term on any shared sample,
since $R^{(i)}=c^{(i)}$ for every response.
Thus the two losses, and their minimizers, are identical.

For (iii), given $s_t$, the label $c$ equals $1$ with probability
$q(s_t)$. Strict propriety gives
\[
\begin{aligned}
&\mathbb E_\pi[\ell(g(s_t),c)\mid s_t]\\
&\quad\ge
  \mathbb E_\pi[\ell(q(s_t),c)\mid s_t],
\end{aligned}
\]
with equality only when $g(s_t)=q(s_t)$. Averaging over $s_t$ gives
$\mathcal L_t^{\mathrm{conf}}(g)\ge
\mathcal L_t^{\mathrm{conf}}(q)$, with equality only when $g=q$
a.s. By (i), $q=V^\pi$. If $q\in\mathcal G_t$, it also minimizes
the squared-loss critic objective by (ii).
\end{proof}

The theorem concerns the prediction target, regardless of estimator
architecture. Self-estimators and external estimators can both target
$q(s_t)$ when they predict from the same prefix.

\subsection{Bootstrapped Critics}
\label{app:td}

With a terminal reward, the $n$-step target from $s_t$ for a candidate value function $V$ is $V(s_{t+n})$ if $t + n \le T$ and $R$ otherwise, so the $\lambda$-return used by GAE \citep{schulman2016gae} is
\begin{align}
\widehat{G}^{\lambda}_t(V)
={}&(1-\lambda)
\Bigl[\sum_{n=1}^{T-t}
\lambda^{n-1}V(s_{t+n})\Bigr]\nonumber\\
&+\lambda^{T-t}R.
\label{eq:app-lambda}
\end{align}
$\lambda = 1$ gives the Monte Carlo target $R$, $\lambda = 0$ gives the one-step TD target $V(s_{t+1})$, and a terminal prefix always has target $R$. 
The target uses $V$'s own predictions at later prefixes. Let
$(\mathcal T^\lambda V)(s_t):=
\mathbb E_\pi[\widehat G_t^\lambda(V)\mid s_t]$
be the average target computed using $V$. We seek a value function
that agrees with this average target:
\[
V(s_t)=(\mathcal T^\lambda V)(s_t).
\]
Such a value function is called a fixed point.

Responses can end at different steps, so the number of value
predictions and the weight on $R$ in Eq.~\eqref{eq:app-lambda}
vary across rollouts. 
To use the same weights for every rollout, let $Z_n=V(s_{t+n})$ when $t+n\le T$, and let $Z_n=R$ afterward.
For $0\le\lambda<1$, the weights after termination add up to
the original weight on $R$:
\[
(1-\lambda)\sum_{n>T-t}\lambda^{n-1}
=\lambda^{T-t}.
\]
Thus the same target can be written as
\begin{equation}
  \widehat G^\lambda_t(V)
  =(1-\lambda)\sum_{n=1}^{\infty}\lambda^{n-1}Z_n.
  \label{eq:app-series}
\end{equation}

\begin{proposition}
\label{prop:td}
Under the assumptions of Theorem~\ref{thm:formal}, for every
$\lambda\in[0,1]$, $\mathcal T^\lambda$ has a unique fixed point
(up to null sets), namely $V^\pi=q$.
Moreover, starting from any bounded $V$, exact repeated updates
reach this fixed point:
\[
(\mathcal T^\lambda)^k V=V^\pi
\quad\text{a.s. for every }k\ge T_{\max}+1.
\]
\end{proposition}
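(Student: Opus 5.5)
The plan has two halves: check that $V^\pi=q$ is a fixed point of $\mathcal T^\lambda$, then show that iterating $\mathcal T^\lambda$ from any bounded $V$ reaches $V^\pi$ after finitely many steps. Uniqueness then comes for free, since any fixed point $V$ satisfies $V=(\mathcal T^\lambda)^kV=V^\pi$ for large $k$. The tool throughout is the series form Eq.~\eqref{eq:app-series}. For $\lambda=1$ the target is $R$ regardless of $V$, so $\mathcal T^1V=\mathbb E_\pi[R\mid s_t]=V^\pi$ after one application and everything is immediate. Below we assume $0\le\lambda<1$.

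For the fixed-point property, plug $V=V^\pi$ into Eq.~\eqref{eq:app-series}. Condition on $s_t$ and use linearity together with bounded convergence; the weights are summable and $V$ is bounded. Each term then needs $\mathbb E_\pi[Z_n\mid s_t]=V^\pi(s_t)$.

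\begin{itemize}
\item On the event $t+n\le T$, we have $Z_n=V^\pi(s_{t+n})$.
\item On the complement, $Z_n=R$, and the absorbing convention gives $s_{t+n}=s_T$ with $V^\pi(s_T)=\mathbb E_\pi[R\mid s_T]$.
\item Whether termination has occurred by step $t+n$ is readable from $s_{t+n}$, so $\mathbb E_\pi[Z_n\mid s_{t+n}]=V^\pi(s_{t+n})$ in both cases.
\item The tower property and the martingale identity of Theorem~\ref{thm:formal}(i), with $u=t+n$ (capped at $T_{\max}$), give $\mathbb E_\pi[Z_n\mid s_t]=V^\pi(s_t)$.
\end{itemize}

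Summing with weights $(1-\lambda)\lambda^{n-1}$, which total $1$, yields $\mathcal T^\lambda V^\pi=V^\pi$.

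For finite-time convergence, set $D_k:=(\mathcal T^\lambda)^kV-V^\pi$ and argue by backward induction on the remaining horizon. Subtracting the two series, the $R$-terms cancel, so
\[
D_{k+1}(s_t)=(1-\lambda)\,\mathbb E_\pi\Bigl[\textstyle\sum_{n=1}^{T-t}\lambda^{n-1}D_k(s_{t+n})\Bigm| s_t\Bigr].
\]
\begin{itemize}
\item If $s_t$ is terminal, the target is $R$, so $D_k(s_t)=0$ for all $k\ge1$.
\item Inductive claim: $D_k(s_t)=0$ a.s. whenever the response from $s_t$ terminates within $k-1$ further steps, that is, $T-t\le k-1$. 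More precisely, the claim holds on prefixes of length $t\ge T_{\max}-k+1$.
\item If it holds for $k$, then every $s_{t+n}$ with $n\ge1$ appearing in the sum lies at a position where $D_k$ vanishes, so $D_{k+1}(s_t)=0$.
\item Since $t\ge0$ and $T\le T_{\max}$, after $k=T_{\max}+1$ iterations $D_k\equiv0$ a.s. This matches the stated bound.
\end{itemize}

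The main obstacle is the measure-theoretic bookkeeping rather than any hard estimate. Three points need care. The random horizon $T$ must be handled so that the indicator $\{t+n\le T\}$ is $s_{t+n}$-measurable. The recursion must be set up so that the induction runs on prefix length $t$ and not on the random $T-t$. And "a.s." statements must be propagated through conditional expectations, which is what makes uniqueness hold only "up to null sets". I would handle the first two with the absorbing convention, indexing the induction by $t$ descending from $T_{\max}$. For the third, I would note that $D_k$ vanishing a.s. at later prefixes implies its conditional expectation vanishes a.s.
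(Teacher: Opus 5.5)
Your proposal is correct and follows the paper's proof. You verify $\mathcal T^\lambda V^\pi=V^\pi$ through the series form Eq.~\eqref{eq:app-series}, where your conditioning on $s_{t+n}$ makes explicit the step that the paper's decomposition $Z_n=V^\pi(s_{t+n})+(R-V^\pi(s_T))\mathbf 1\{t+n>T\}$ leaves implicit. You then run backward induction on prefix length, so that after $k$ updates every prefix of length at least $T_{\max}-k+1$ is correct, and uniqueness follows from the same argument.

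The one slip is your first phrasing of the inductive claim, ``$D_k(s_t)=0$ whenever $T-t\le k-1$''. Read for the realized random $T$, this is false, since $D_k(s_t)$ depends on $s_t$ alone. The prefix-length version you state next is the one your induction step actually uses, so nothing else changes.
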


\begin{proof}
\emph{Why $V^\pi$ is a fixed point.}
When $\lambda=1$, the target is $R$ regardless of $V$.
Hence, for any $V$,
\[
\begin{aligned}
(\mathcal T^1 V)(s_t)
&=\mathbb E_\pi[R\mid s_t]\\
&=V^\pi(s_t).
\end{aligned}
\]
In particular, $\mathcal T^1 V^\pi=V^\pi$.

Now take $\lambda<1$ and use Eq.~\eqref{eq:app-series} with
$V=V^\pi$. Each $Z_n$ uses the true value of a later prefix
until the response ends, and uses the observed reward $R$
afterward. With prefixes held at $s_T$ after termination,
\[
\begin{aligned}
Z_n
&=V^\pi(s_{t+n})\\
&\quad+
\bigl(R-V^\pi(s_T)\bigr)
\mathbf{1}\{t+n>T\}.
\end{aligned}
\]

\emph{Why the fixed point is unique.}
At a terminal prefix $s_T$, the target is $R$ and does not
depend on the starting function $V$. Thus one exact update
sets its value to $\mathbb E_\pi[R\mid s_T]=V^\pi(s_T)$.
At any earlier prefix, the target uses $V$ only at later
prefixes. Once those later values are correct, the next
update also sets the current value to $V^\pi(s_t)$.
We can therefore work backward from the end of the response.
After $k$ updates, all prefixes of length at least
$T_{\max}-k+1$ have the correct value.
After $T_{\max}+1$ updates, all prefixes do. 
The same backward argument forces any fixed point to equal $V^\pi$.
\end{proof}

% \subsection{Properties of the Shared Solution}
% \label{app:props}

% \begin{corollary}
% \label{cor:props}
% Let $q = V^\pi$ as in Theorem~\ref{thm:formal}. (a) $q$ is calibrated: $\mathbb{P}_\pi(c = 1 \mid q(s_t) = p) = p$. (b) $q$, and any strictly increasing transform of it, attains the maximal AUROC among all functions of $s_t$. (c) $\mathbb{E}_\pi[V^\pi(s_{t+1}) \mid s_t] = V^\pi(s_t)$.
% \end{corollary}

% \begin{proof}
% (a) Conditioning first on $s_t$ and then on the event $\{q(s_t) = p\}$, the tower property gives $\mathbb{P}_\pi(c = 1 \mid q(s_t) = p) = \mathbb{E}_\pi[q(s_t) \mid q(s_t) = p] = p$.
% (b) The posterior probability is the AUC-optimal scoring function for bipartite ranking \citep{clemencon2008ranking}: thresholding $q$ is a likelihood-ratio test, which is most powerful at every level by the Neyman--Pearson lemma \citep{neyman1933problem}, so its ROC curve dominates all others; strictly increasing transforms preserve rankings.
% (c) This is Theorem~\ref{thm:formal}(i) with $u = t + 1$.
% \end{proof}

\subsection{Properties of the Shared Target}
\label{app:props}

Theorem~\ref{thm:formal} shows that confidence estimation and
critic learning share the population target
$q(s_t)=V^\pi(s_t)$. The following properties explain why this
target is useful for calibration and ranking.

\begin{corollary}
\label{cor:props}
Let $Q=q(s_t)=V^\pi(s_t)$.
(a) $Q$ is calibrated: $\mathbb E_\pi[c\mid Q]=Q$ a.s.
(b) Among scores based on $s_t$, $q(s_t)$ achieves the highest
possible population AUROC, provided both correctness labels occur.
Any strictly increasing transformation of $q$ has the same AUROC.
\end{corollary}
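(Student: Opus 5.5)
For part (a) I would use the tower property, conditioning first on the full prefix and then on the coarser variable $Q$. By Theorem~\ref{thm:formal}(i), $\mathbb E_\pi[c\mid s_t]=q(s_t)=Q$ a.s. Because $Q$ is a measurable function of $s_t$, the $\sigma$-algebra generated by $Q$ is contained in that of $s_t$. Hence
\[
\mathbb E_\pi[c\mid Q]=\mathbb E_\pi\bigl[\mathbb E_\pi[c\mid s_t]\mid Q\bigr]=\mathbb E_\pi[Q\mid Q]=Q \quad\text{a.s.}
\]
Since $c$ is binary, this reads $\mathbb P_\pi(c=1\mid Q=p)=p$ for almost every $p$ in the range of $Q$. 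If the reader prefers the discrete language used in the proof of Theorem~\ref{thm:formal}, the same identity follows by summing over the values $z$ of $s_t$ with $q(z)=p$.

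For part (b) I would write the population AUROC of a score $g(s_t)$ as
\[
\mathrm{AUC}(g)=\mathbb P\bigl(g(S)>g(S')\bigr)+\tfrac12\mathbb P\bigl(g(S)=g(S')\bigr),
\]
where $S$ is distributed as $s_t$ given $c=1$, $S'$ is an independent copy given $c=0$, and both classes have positive probability. By Bayes' rule, the class-conditional laws have densities, relative to the marginal law $\mu$ of $s_t$, proportional to $q$ and $1-q$. This turns the AUROC into a double integral:
\[
\mathrm{AUC}(g)=\frac{1}{\pi_1\pi_0}\iint q(z)\bigl(1-q(z')\bigr)\,w_g(z,z')\,d\mu(z)\,d\mu(z'),
\]
with $w_g=\mathbf 1\{g(z)>g(z')\}+\tfrac12\mathbf 1\{g(z)=g(z')\}$, $\pi_1=\mathbb P_\pi(c=1)$ and $\pi_0=\mathbb P_\pi(c=0)$.

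Next I would symmetrize over each unordered pair $\{z,z'\}$. The weights satisfy $w_g(z,z')+w_g(z',z)=1$, so for each pair the integrand splits as a convex combination of $a=q(z)(1-q(z'))$ and $b=q(z')(1-q(z))$. Note that $a-b=q(z)-q(z')$. The combination is therefore maximized pointwise by placing weight $1$ on the larger of $a,b$, which is exactly what $g=q$ does; when $q(z)=q(z')$ we have $a=b$, and any weighting gives the same value. Integrating this pointwise inequality shows $\mathrm{AUC}(q)\ge\mathrm{AUC}(g)$. This is the bipartite-ranking optimality result of \citet{clemencon2008ranking}.

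A strictly increasing transform $\phi$ preserves both the events $\{g(z)>g(z')\}$ and $\{g(z)=g(z')\}$, so $w_{\phi\circ q}=w_q$ and the AUROC is unchanged.

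I expect the main obstacle to be handling ties and the measure-theoretic change of variables cleanly, in particular justifying the density representation when $s_t$ ranges over a countable set of prefixes. I would avoid that difficulty by working with sums over prefix values $z$, exactly as in the proof of Theorem~\ref{thm:formal}(i), since the token space is finite and $T\le T_{\max}$.
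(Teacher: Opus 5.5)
Your proposal is correct and takes essentially the same route as the paper. Part (a) is the same grouping/tower-property argument, and part (b) rests on the same pairwise identity $q(z)(1-q(z'))-q(z')(1-q(z))=q(z)-q(z')$, with your Bayes change of measure and symmetrization over unordered pairs (ties weighted $\tfrac12$) making rigorous the paper's informal claim that ranking by $q$ picks the favorable order for every pair.
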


\begin{proof}
For (a), $q(s_t)$ is the probability of a correct answer given
the prefix $s_t$. If we group together prefixes for which
$q(s_t)=p$, each prefix in that group has correctness probability
$p$. The group's average correctness probability is therefore
also $p$. Equivalently,
\[
\mathbb E_\pi[c\mid Q]
=\mathbb E_\pi[q(s_t)\mid Q]
=Q.
\]

For (b), consider two independently sampled prefixes $s$ and
$s'$ at the same step. Write $a=q(s)$ and $b=q(s')$.
Given these prefixes, the probability that the first response
is correct and the second is incorrect is $a(1-b)$.
The probability of the reverse outcome is $(1-a)b$.
Their difference is
\[
a(1-b)-(1-a)b=a-b.
\]
Thus, when $a>b$, ranking $s$ above $s'$ is more likely to put
a correct response ahead of an incorrect one. Ranking all
prefixes by $q$ makes this choice for every pair and therefore
maximizes AUROC~\citep{clemencon2008ranking}. A strictly
increasing transformation leaves these rankings unchanged.
\end{proof}

\subsection{Assumptions and Extensions}
\label{app:scope}

\begin{itemize}
  % \item \textbf{Terminal state.} With a deterministic grader, $q(s_T) = c \in \{0, 1\}$. The problem is still nontrivial: estimators have finite capacity and must generalize to unseen $(x, y)$, and Theorem~\ref{thm:formal}(ii) says the calibrator and the critic face this finite-sample problem identically. For $t < T$, $q(s_t) \in [0, 1]$ is a genuine probability over the policy's own continuations, which is what the intermediate-step analysis in Section~\ref{sec:exp-qual} uses.
  % \item \textbf{Realizability.} Part (ii) needs no assumption. Part (iii) needs $q \in \mathcal{G}_t$; otherwise the log-loss and squared-loss solutions are different projections of $q$, and under function approximation the iteration of Proposition~\ref{prop:td} converges, when it converges, to a projected fixed point rather than to $V^\pi$ \citep{sutton2018reinforcement}. Both gaps vanish for critics trained with a classification loss \citep{farebrother2024stop}.
  \item \textbf{Which policy.} $V^\pi$ is the value of a fixed policy, including its decoding settings. The RL critic tracks the evolving policy, so it approximates $V^{\pi_K}$ for the final policy $\pi_K$ only up to the lag of its last updates. \method pairs the final critic with $\pi_K$ under training-time decoding and measures the residual gap empirically.
  \item \textbf{Reward.} The same grader must define the RL reward and the correctness label being estimated: the critic estimates $\mathbb{P}_\pi(c = 1 \mid s_t)$ for the $c$ it was trained on. If $R = c + h$ (a KL penalty folded into the reward, or format and length shaping), then $V^\pi(s_t) = q(s_t) + \mathbb{E}_\pi[h \mid s_t]$; the identity is exact only when the critic's reward is correctness, as in verifiable-reward RL where KL enters as a policy loss. For a graded reward $R \in [0, 1]$, all results hold with $q$ replaced by $\mathbb{E}_\pi[R \mid s_t]$.
  \item \textbf{Discounting and optimization.} The identity uses $\gamma = 1$, the default in LLMs RL. With $\gamma < 1$ the critic targets $\mathbb{E}_\pi[\gamma^{T-t} c \mid s_t]$, which discounts long continuations and is no longer a probability of correctness. 
  % The results characterize the population target of the critic loss, not the outcome of optimizing a neural critic with minibatch reuse and value clipping (the latter is inactive at a fixed point, so $V^\pi$ remains a fixed point of PPO's clipped value update); that gap is the one faced by any trained calibrator, and it is what the experiments measure.
  \item \textbf{Distribution.} All statements are under the RL prompt distribution $\mathcal{D}_X$; transfer to other tasks is empirical (Section~\ref{sec:experiments}).
\end{itemize}

\section{Related Work on Training Value Functions for LLMs RL}
\label{app:related}
Value functions are central to RL, from GAE \citep{schulman2016gae} in PPO
\citep{schulman2017ppo} to AlphaGo, whose value network predicts the probability of
winning \citep{silver2016alphago}; classification-trained value functions further
link value learning to the calibration objective \citep{farebrother2024stop}. The
PPO critic is standard in RLHF \citep{ouyang2022training,ziegler2019finetuning}.
Critic-free methods drop the value network for Monte Carlo baselines to save memory
and avoid difficult critic training
\citep{shao2024deepseekmath,deepseekai2025r1,ahmadian2024basics,kazemnejad2024vineppo, wu2026teaching},
but recent work extensively shows well-trained critics give finer credit assignment, lower
variance, and higher sample efficiency for long-CoT reasoning and agentic settings
\citep{yuan2025vcppo,yue2025vapo,hu2025openreasoner,zhou2024archer}. Value models are
also reused for search \citep{liu2023dont} and used for obtaining correctness probabilities in
process reward models \citep{yu2024ovm,wang2024mathshepherd}. 
In this work, we show the critic is
additionally a calibrated, general-purpose confidence estimator for LLMs.
% ============================================================
% Appendices B (experimental details) and C (full results).
% Place after the theory appendix (Appendix A). Needs \usepackage{booktabs,multirow}.
% Numbers are generated from the results sheet; "--" = not run.
% [TODO: ...] marks facts to confirm before submission.
% ============================================================

\section{Experimental Details}
\label{app:exp-details}

\paragraph{Model.}
We initialize the actor and critic from Qwen3-8B or Nemotron-Cascade-8B, with a randomly initialized scalar value head for the critic.
Thinking mode remains enabled under the default chat template.
All prompts request a final answer in $\backslash\mathrm{boxed}\{\}$.

\paragraph{PPO configuration.}
We train with PPO and GAE using a binary correctness reward applied only at the final response token.
With $\gamma=\lambda=1$, the critic's return target equals the observed correctness label at every response position.
KL regularization is applied only to the actor loss, leaving the reward unchanged, and the critic uses unclipped MSE.
Training runs for 300 steps without repeating prompts.
Table~\ref{tab:hparams} summarizes the key settings.

\begin{table}[t]
\centering
\footnotesize
\renewcommand{\arraystretch}{1.08}
\setlength{\tabcolsep}{3pt}
\begin{tabular*}{\columnwidth}{
  @{\extracolsep{\fill}}ll@{}
}
\toprule
\textbf{Setting} & \textbf{Value} \\
\midrule
Algorithm & PPO with GAE \\
Discount $\gamma$ / GAE $\lambda$ & $1.0$ / $1.0$ \\
Reward & Terminal correctness, $\{0,1\}$ \\
Actor / critic learning rate & $10^{-6}$ / $2\times10^{-5}$ \\
Learning-rate schedule & Constant \\
Actor clip ratio & $0.2$ \\
Actor KL loss coefficient & $10^{-3}$ \\
Reward KL coefficient & $0$ \\
Critic loss & Unclipped MSE \\
Entropy coefficient & $0$ \\
PPO epochs per batch & $1$ (actor and critic) \\
Gradient clipping & $1.0$ \\
\midrule
Prompts per step & $16$ \\
Rollouts per prompt & $64$ \\
PPO mini-batch size & $16$ \\
Max prompt / response tokens & $2{,}048$ / $16{,}384$ \\
Sampling temperature / top-$p$ & $1.0$ / $1.0$ \\
\bottomrule
\end{tabular*}
\caption{Key PPO hyperparameters. KL regularization affects only the actor objective.}
\label{tab:hparams}
\end{table}

\paragraph{Evaluation.}
We use the same sampling temperature, top-$p$, and response-length limit as in training, drawing four responses per prompt.
Correctness is determined by rule-based grading, with truncated responses counted as incorrect.
Confidence is read from the critic's scalar output at the last non-padding token.
We compute AUROC from raw outputs and compute ECE (15 equal-width bins) and Brier score after clipping outputs to $[0,1]$.

\section{Full Results}
\label{app:full-results}

Tables~\ref{tab:full-math}--\ref{tab:full-tool} report AUROC, ECE and Brier score for every math, code and agentic tool calling benchmark and both LLMs.

\begin{table*}[t]
\centering
\footnotesize
\renewcommand{\arraystretch}{1.1}
\setlength{\tabcolsep}{2.5pt}
\resizebox{\textwidth}{!}{%
\begin{tabular}{c c l *{10}{c}}
\toprule
& & & \multicolumn{6}{c}{In distribution} & \multicolumn{2}{c}{Out of distribution} & \multicolumn{2}{c}{Average} \\
\cmidrule(lr){4-9} \cmidrule(lr){10-11} \cmidrule(lr){12-13}
& & Method & synth.\ math & olympiads & synth.\ amc & cn\_k12 & aops & amc\_aime & AIME 24 & AIME 25 & ID & OOD \\
\midrule
& & Verbalized & 0.70 & 0.63 & 0.69 & 0.62 & 0.63 & 0.67 & 0.75 & 0.70 & 0.66 & 0.73 \\
& & Self-consistency & 0.67 & 0.73 & 0.64 & 0.68 & 0.77 & 0.72 & 0.91 & \textbf{0.96} & 0.70 & \textbf{0.94} \\
& & R-Tuning & 0.86 & 0.89 & 0.93 & 0.87 & 0.87 & \textbf{0.91} & 0.91 & 0.85 & 0.89 & 0.88 \\
& & Probe & 0.86 & 0.90 & 0.92 & 0.87 & 0.87 & 0.89 & 0.89 & 0.89 & 0.88 & 0.89 \\
\rowcolor{lightpink} \cellcolor{white} & \cellcolor{white}\multirow{-5}{*}{AUC$\uparrow$} & \method{} (ours) & \textbf{0.89} & \textbf{0.93} & \textbf{0.95} & \textbf{0.88} & \textbf{0.89} & 0.89 & \textbf{0.92} & 0.95 & \textbf{0.91} & \textbf{0.94} \\
\cmidrule(l){3-13}
& & Verbalized & 0.53 & 0.56 & 0.70 & 0.46 & 0.58 & 0.37 & 0.17 & 0.21 & 0.56 & 0.19 \\
& & Self-consistency & 0.45 & 0.40 & 0.63 & 0.39 & 0.35 & 0.27 & \textbf{0.06} & 0.25 & 0.44 & 0.16 \\
& & R-Tuning & 0.09 & 0.03 & 0.05 & 0.03 & \textbf{0.03} & 0.08 & 0.25 & 0.22 & 0.05 & 0.24 \\
& & Probe & 0.06 & 0.04 & 0.06 & 0.03 & 0.05 & 0.09 & 0.25 & 0.24 & 0.05 & 0.24 \\
\rowcolor{lightpink} \cellcolor{white} & \cellcolor{white}\multirow{-5}{*}{ECE$\downarrow$} & \method{} (ours) & \textbf{0.04} & \textbf{0.02} & \textbf{0.04} & \textbf{0.02} & \textbf{0.03} & \textbf{0.07} & \textbf{0.06} & \textbf{0.20} & \textbf{0.03} & \textbf{0.13} \\
\cmidrule(l){3-13}
& & Verbalized & 0.50 & 0.54 & 0.65 & 0.45 & 0.55 & 0.35 & 0.19 & 0.23 & 0.53 & 0.21 \\
& & Self-consistency & 0.44 & 0.37 & 0.60 & 0.38 & 0.34 & 0.27 & \textbf{0.05} & \textbf{0.06} & 0.42 & \textbf{0.06} \\
& & R-Tuning & \textbf{0.16} & 0.13 & \textbf{0.10} & 0.14 & \textbf{0.14} & 0.12 & 0.15 & 0.20 & \textbf{0.13} & 0.18 \\
& & Probe & 0.18 & 0.15 & 0.11 & 0.15 & 0.15 & 0.13 & 0.16 & 0.18 & 0.15 & 0.17 \\
\rowcolor{lightpink} \cellcolor{white}\multirow{-15}{*}{\rotatebox[origin=c]{90}{Qwen3-8B}} & \cellcolor{white}\multirow{-5}{*}{Brier$\downarrow$} & \method{} (ours) & 0.19 & \textbf{0.12} & \textbf{0.10} & \textbf{0.13} & \textbf{0.14} & \textbf{0.11} & 0.15 & 0.14 & 0.14 & 0.15 \\
\midrule
& & Verbalized & 0.67 & 0.63 & 0.61 & 0.60 & 0.60 & 0.64 & 0.68 & 0.72 & 0.62 & 0.70 \\
& & Self-consistency & 0.81 & 0.85 & 0.78 & 0.84 & 0.86 & 0.86 & 0.95 & 0.98 & 0.83 & 0.97 \\
& & R-Tuning & 0.84 & 0.87 & 0.86 & 0.83 & 0.85 & 0.88 & 0.88 & 0.87 & 0.85 & 0.88 \\
& & Probe & 0.83 & 0.86 & 0.85 & 0.82 & 0.83 & 0.85 & 0.89 & 0.84 & 0.84 & 0.87 \\
\rowcolor{lightpink} \cellcolor{white} & \cellcolor{white}\multirow{-5}{*}{AUC$\uparrow$} & \method{} (ours) & \textbf{0.86} & \textbf{0.88} & \textbf{0.89} & \textbf{0.86} & \textbf{0.87} & \textbf{0.89} & \textbf{0.99} & \textbf{0.99} & \textbf{0.87} & \textbf{0.99} \\
\cmidrule(l){3-13}
& & Verbalized & 0.42 & 0.48 & 0.58 & 0.39 & 0.54 & 0.27 & \textbf{0.19} & 0.33 & 0.47 & 0.26 \\
& & Self-consistency & 0.25 & 0.20 & 0.35 & 0.17 & 0.18 & 0.14 & 0.20 & 0.16 & 0.23 & 0.18 \\
& & R-Tuning & 0.04 & 0.02 & 0.05 & 0.05 & \textbf{0.01} & \textbf{0.10} & 0.21 & 0.18 & 0.04 & 0.20 \\
& & Probe & 0.03 & 0.04 & \textbf{0.04} & 0.06 & 0.03 & 0.11 & 0.23 & 0.14 & 0.04 & 0.19 \\
\rowcolor{lightpink} \cellcolor{white} & \cellcolor{white}\multirow{-5}{*}{ECE$\downarrow$} & \method{} (ours) & \textbf{0.02} & \textbf{0.01} & \textbf{0.04} & \textbf{0.04} & 0.03 & \textbf{0.10} & 0.22 & \textbf{0.12} & \textbf{0.03} & \textbf{0.17} \\
\cmidrule(l){3-13}
& & Verbalized & 0.41 & 0.45 & 0.53 & 0.40 & 0.50 & 0.30 & 0.25 & 0.32 & 0.45 & 0.28 \\
& & Self-consistency & 0.25 & 0.19 & 0.30 & 0.19 & 0.17 & 0.17 & 0.18 & 0.19 & 0.22 & 0.19 \\
& & R-Tuning & \textbf{0.16} & 0.14 & 0.13 & 0.17 & 0.14 & 0.15 & 0.17 & 0.17 & 0.15 & 0.17 \\
& & Probe & 0.17 & 0.14 & 0.13 & 0.18 & 0.15 & 0.17 & 0.18 & 0.18 & 0.15 & 0.18 \\
\rowcolor{lightpink} \cellcolor{white}\multirow{-15}{*}{\rotatebox[origin=c]{90}{\parbox{2.2cm}{\centering Nemotron-\\Cascade-8B}}} & \cellcolor{white}\multirow{-5}{*}{Brier$\downarrow$} & \method{} (ours) & \textbf{0.16} & \textbf{0.13} & \textbf{0.12} & \textbf{0.16} & \textbf{0.13} & \textbf{0.14} & \textbf{0.15} & \textbf{0.13} & \textbf{0.14} & \textbf{0.14} \\
\bottomrule
\end{tabular}%
}
\caption{Math, per benchmark. ID columns are the six NuminaMath sources used for RL. OOD is AIME 2024 and 2025. Best per column within each model and metric in bold.}
\label{tab:full-math}
\end{table*}

\begin{table*}[t]
\centering
\footnotesize
\renewcommand{\arraystretch}{1.1}
\setlength{\tabcolsep}{3pt}
\newcolumntype{E}{>{\centering\arraybackslash}p{1.2cm}}
\begin{tabular}{c c l *{8}{E}}
\toprule
& & & \multicolumn{3}{c}{In distribution} & \multicolumn{3}{c}{Out of distribution} & \multicolumn{2}{c}{Average} \\
\cmidrule(lr){4-6} \cmidrule(lr){7-9} \cmidrule(lr){10-11}
& & Method & TACO & \makecell{Code-\\Contests} & \makecell{Code-\\forces} & \makecell{LCB\\Easy} & \makecell{LCB\\Medium} & \makecell{LCB\\Hard} & ID & OOD \\
\midrule
& & Verbalized & 0.53 & 0.69 & 0.65 & 0.52 & 0.65 & 0.65 & 0.60 & 0.61 \\
& & Self-consistency & 0.66 & 0.90 & 0.90 & 0.83 & 0.92 & 0.92 & 0.78 & 0.89 \\
& & R-Tuning & 0.87 & 0.86 & 0.84 & \textbf{0.84} & 0.89 & 0.84 & 0.86 & 0.85 \\
& & Probe & 0.90 & 0.87 & 0.85 & 0.81 & 0.89 & 0.85 & 0.88 & 0.84 \\
\rowcolor{lightpink} \cellcolor{white} & \cellcolor{white}\multirow{-5}{*}{AUC$\uparrow$} & \method{} (ours) & \textbf{0.93} & \textbf{0.93} & \textbf{0.91} & \textbf{0.84} & \textbf{0.94} & \textbf{0.95} & \textbf{0.93} & \textbf{0.91} \\
\cmidrule(l){3-11}
& & Verbalized & 0.37 & 0.36 & 0.36 & 0.04 & 0.15 & 0.48 & 0.37 & 0.28 \\
& & Self-consistency & 0.30 & 0.14 & 0.14 & \textbf{0.01} & \textbf{0.09} & 0.16 & 0.22 & 0.10 \\
& & R-Tuning & 0.05 & 0.04 & 0.06 & 0.07 & 0.14 & 0.07 & 0.05 & 0.08 \\
& & Probe & 0.07 & 0.05 & 0.05 & 0.11 & 0.15 & 0.07 & 0.06 & 0.10 \\
\rowcolor{lightpink} \cellcolor{white} & \cellcolor{white}\multirow{-5}{*}{ECE$\downarrow$} & \method{} (ours) & \textbf{0.04} & \textbf{0.03} & \textbf{0.04} & 0.03 & \textbf{0.09} & \textbf{0.06} & \textbf{0.04} & \textbf{0.06} \\
\cmidrule(l){3-11}
& & Verbalized & 0.40 & 0.36 & 0.37 & 0.04 & 0.21 & 0.42 & 0.38 & 0.26 \\
& & Self-consistency & 0.31 & 0.14 & 0.15 & \textbf{0.02} & \textbf{0.08} & 0.16 & 0.23 & 0.10 \\
& & R-Tuning & 0.14 & 0.16 & 0.16 & 0.04 & 0.13 & 0.14 & 0.15 & 0.11 \\
& & Probe & 0.13 & 0.15 & 0.16 & 0.08 & 0.13 & 0.14 & 0.14 & 0.12 \\
\rowcolor{lightpink} \cellcolor{white}\multirow{-15}{*}{\rotatebox[origin=c]{90}{Qwen3-8B}} & \cellcolor{white}\multirow{-5}{*}{Brier$\downarrow$} & \method{} (ours) & \textbf{0.11} & \textbf{0.11} & \textbf{0.12} & 0.05 & 0.12 & \textbf{0.09} & \textbf{0.11} & \textbf{0.08} \\
\midrule
& & Verbalized & 0.60 & 0.57 & 0.55 & 0.59 & 0.61 & 0.61 & 0.58 & 0.60 \\
& & Self-consistency & 0.66 & 0.87 & 0.88 & \textbf{0.83} & 0.92 & 0.79 & 0.76 & \textbf{0.88} \\
& & R-Tuning & 0.91 & 0.88 & 0.87 & 0.59 & 0.92 & 0.88 & 0.89 & 0.79 \\
& & Probe & 0.91 & 0.88 & 0.87 & 0.62 & 0.90 & 0.88 & 0.89 & 0.80 \\
\rowcolor{lightpink} \cellcolor{white} & \cellcolor{white}\multirow{-5}{*}{AUC$\uparrow$} & \method{} (ours) & \textbf{0.93} & \textbf{0.91} & \textbf{0.90} & 0.65 & \textbf{0.99} & \textbf{0.91} & \textbf{0.91} & 0.84 \\
\cmidrule(l){3-11}
& & Verbalized & 0.33 & 0.35 & 0.33 & 0.09 & 0.10 & 0.55 & 0.33 & 0.32 \\
& & Self-consistency & 0.31 & 0.18 & 0.14 & \textbf{0.01} & \textbf{0.09} & 0.27 & 0.24 & 0.20 \\
& & R-Tuning & \textbf{0.04} & \textbf{0.02} & 0.04 & 0.12 & 0.16 & 0.04 & \textbf{0.03} & 0.09 \\
& & Probe & \textbf{0.04} & \textbf{0.02} & \textbf{0.02} & 0.12 & 0.15 & 0.04 & \textbf{0.03} & 0.09 \\
\rowcolor{lightpink} \cellcolor{white} & \cellcolor{white}\multirow{-5}{*}{ECE$\downarrow$} & \method{} (ours) & \textbf{0.04} & \textbf{0.02} & \textbf{0.02} & 0.13 & 0.14 & \textbf{0.03} & \textbf{0.03} & \textbf{0.08} \\
\cmidrule(l){3-11}
& & Verbalized & 0.34 & 0.35 & 0.35 & 0.02 & 0.16 & 0.49 & 0.35 & 0.28 \\
& & Self-consistency & 0.32 & 0.18 & 0.16 & \textbf{0.01} & \textbf{0.08} & 0.25 & 0.25 & 0.16 \\
& & R-Tuning & 0.12 & 0.14 & 0.15 & 0.03 & 0.11 & \textbf{0.13} & 0.13 & \textbf{0.09} \\
& & Probe & 0.12 & 0.14 & 0.15 & 0.03 & 0.12 & \textbf{0.13} & 0.13 & \textbf{0.09} \\
\rowcolor{lightpink} \cellcolor{white}\multirow{-15}{*}{\rotatebox[origin=c]{90}{\parbox{2.2cm}{\centering Nemotron-\\Cascade-8B}}} & \cellcolor{white}\multirow{-5}{*}{Brier$\downarrow$} & \method{} (ours) & \textbf{0.11} & \textbf{0.12} & \textbf{0.13} & 0.03 & 0.12 & \textbf{0.13} & \textbf{0.12} & \textbf{0.09} \\
\bottomrule
\end{tabular}
\caption{Code, per benchmark. ID columns are the three sources used for RL. OOD is LiveCodeBench (LCB) by difficulty. Best per column within each model and metric in bold.}
\label{tab:full-code}
\end{table*}

\begin{table*}[t]
\centering
\footnotesize
\renewcommand{\arraystretch}{1.1}
\setlength{\tabcolsep}{5pt}
\begin{tabular}{c c l *{6}{c}}
\toprule
& & & \multicolumn{1}{c}{In distribution} & \multicolumn{3}{c}{Out of distribution} & \multicolumn{2}{c}{Average} \\
\cmidrule(lr){4-4} \cmidrule(lr){5-7} \cmidrule(lr){8-9}
& & Method & ToolRL & API-Bank L1 & API-Bank L2 & API-Bank L3 & ID & OOD \\
\midrule
& & Verbalized & 0.47 & 0.52 & 0.65 & 0.68 & 0.47 & 0.57 \\
& & Self-consistency & 0.71 & 0.92 & 0.92 & 0.85 & 0.71 & 0.91 \\
& & R-Tuning & 0.95 & 0.91 & 0.92 & 0.86 & 0.95 & 0.90 \\
& & Probe & 0.93 & 0.84 & 0.92 & 0.85 & 0.93 & 0.85 \\
\rowcolor{lightpink} \cellcolor{white} & \cellcolor{white}\multirow{-5}{*}{AUC$\uparrow$} & \method{} (ours) & \textbf{0.97} & \textbf{0.93} & \textbf{0.95} & \textbf{0.89} & \textbf{0.97} & \textbf{0.92} \\
\cmidrule(l){3-9}
& & Verbalized & 0.37 & 0.21 & 0.41 & 0.51 & 0.37 & 0.30 \\
& & Self-consistency & 0.38 & 0.23 & 0.33 & 0.49 & 0.38 & 0.30 \\
& & R-Tuning & 0.07 & 0.20 & 0.17 & 0.16 & 0.07 & 0.19 \\
& & Probe & 0.12 & 0.12 & 0.18 & 0.21 & 0.12 & 0.15 \\
\rowcolor{lightpink} \cellcolor{white} & \cellcolor{white}\multirow{-5}{*}{ECE$\downarrow$} & \method{} (ours) & \textbf{0.06} & \textbf{0.09} & \textbf{0.12} & \textbf{0.15} & \textbf{0.06} & \textbf{0.11} \\
\cmidrule(l){3-9}
& & Verbalized & 0.39 & 0.26 & 0.42 & 0.50 & 0.39 & 0.33 \\
& & Self-consistency & 0.36 & 0.23 & 0.32 & 0.48 & 0.36 & 0.30 \\
& & R-Tuning & 0.08 & \textbf{0.17} & 0.14 & 0.14 & 0.08 & 0.16 \\
& & Probe & 0.10 & 0.18 & 0.14 & 0.18 & 0.10 & 0.18 \\
\rowcolor{lightpink} \cellcolor{white}\multirow{-15}{*}{\rotatebox[origin=c]{90}{Qwen3-8B}} & \cellcolor{white}\multirow{-5}{*}{Brier$\downarrow$} & \method{} (ours) & \textbf{0.05} & \textbf{0.17} & \textbf{0.13} & \textbf{0.11} & \textbf{0.05} & \textbf{0.15} \\
\midrule
& & Verbalized & 0.56 & 0.49 & 0.48 & 0.34 & 0.56 & 0.46 \\
& & Self-consistency & 0.81 & \textbf{0.80} & 0.75 & 0.68 & 0.81 & \textbf{0.77} \\
& & R-Tuning & 0.86 & 0.78 & 0.71 & 0.54 & 0.86 & 0.72 \\
& & Probe & 0.86 & 0.77 & 0.69 & 0.66 & 0.86 & 0.74 \\
\rowcolor{lightpink} \cellcolor{white} & \cellcolor{white}\multirow{-5}{*}{AUC$\uparrow$} & \method{} (ours) & \textbf{0.88} & 0.79 & \textbf{0.76} & \textbf{0.74} & \textbf{0.88} & \textbf{0.77} \\
\cmidrule(l){3-9}
& & Verbalized & 0.32 & 0.26 & 0.36 & 0.31 & 0.32 & 0.28 \\
& & Self-consistency & 0.22 & 0.22 & 0.22 & 0.32 & 0.22 & 0.24 \\
& & R-Tuning & 0.04 & 0.23 & \textbf{0.19} & 0.24 & 0.04 & 0.23 \\
& & Probe & 0.04 & 0.33 & 0.28 & 0.23 & 0.04 & 0.30 \\
\rowcolor{lightpink} \cellcolor{white} & \cellcolor{white}\multirow{-5}{*}{ECE$\downarrow$} & \method{} (ours) & \textbf{0.03} & \textbf{0.21} & 0.20 & \textbf{0.21} & \textbf{0.03} & \textbf{0.21} \\
\cmidrule(l){3-9}
& & Verbalized & 0.36 & 0.33 & 0.40 & 0.40 & 0.36 & 0.35 \\
& & Self-consistency & 0.22 & 0.28 & \textbf{0.26} & 0.30 & 0.22 & 0.28 \\
& & R-Tuning & 0.15 & 0.24 & \textbf{0.26} & 0.31 & 0.15 & 0.26 \\
& & Probe & 0.14 & 0.31 & 0.31 & 0.29 & 0.14 & 0.30 \\
\rowcolor{lightpink} \cellcolor{white}\multirow{-15}{*}{\rotatebox[origin=c]{90}{\parbox{2.2cm}{\centering Nemotron-\\Cascade-8B}}} & \cellcolor{white}\multirow{-5}{*}{Brier$\downarrow$} & \method{} (ours) & \textbf{0.13} & \textbf{0.23} & \textbf{0.26} & \textbf{0.28} & \textbf{0.13} & \textbf{0.24} \\
\bottomrule
\end{tabular}
\caption{Agentic tool calling, per benchmark. ID is held-out ToolRL. OOD is API-Bank levels 1 to 3. Best per column within each model and metric in bold.}
\label{tab:full-tool}
\end{table*}

\end{document}